\newcommand{\para}[1]{%
  \par
  \addvspace{\medskipamount}
  \textit{#1\@addpunct{.}}\enspace\ignorespaces
}
\newcommand{\abs}[1]{\left\vert#1\right\vert}
\newcommand{\norm}[2]{\left\|#2\right\|_{#1}}
\newcommand{\inner}[2]{\langle#2\rangle_{#1}}
\newtheorem{theorem}{Theorem}
\newtheorem{lemma}[theorem]{Lemma}
\newtheorem{corollary}[theorem]{Corollary}
\newtheorem*{remark}{Remark}
\newtheorem{example}[theorem]{Example}
\newtheorem{definition}[theorem]{Definition}
\newtheorem{proposition}[theorem]{Proposition}
\title{Layerwise Systematic Scan: \\
Deep Boltzmann Machines and Beyond}
\author{Heng Guo}
\address[Heng Guo]{School of Informatics, University of Edinburgh, United Kingdom.}
\email{h.guo@inf.ed.ac.uk}
\author{Kaan Kara}
\address[Kaan Kara]{Department of Computer Science, ETH Zurich, Switzerland}
\email{kaan.kara@inf.ethz.ch}
\author{Ce Zhang}
\address[Ce Zhang]{Department of Computer Science, ETH Zurich, Switzerland}
\email{ce.zhang@inf.ethz.ch}
\begin{document}

\begin{abstract}
For Markov chain Monte Carlo methods, one of the greatest discrepancies between theory and system is the \emph{scan order} ---
while most theoretical development on the mixing time analysis deals with random updates, real-world systems are implemented with systematic scans.
We bridge this gap for models that exhibit a bipartite structure, including, most notably, the Restricted/Deep Boltzmann Machine.
The de facto implementation for these models scans variables in a layer-wise fashion.
We show that the Gibbs sampler with a layerwise alternating scan order has its relaxation time (in terms of epochs) no larger than that of a random-update Gibbs sampler (in terms of variable updates).
We also construct examples to show that this bound is asymptotically tight.
Through standard inequalities, our result also implies a comparison on the mixing times.
\end{abstract}

\maketitle

\section{Introduction}

Gibbs sampling, or the Markov chain Monte Carlo method in general,
plays a central role in machine learning and 
have been widely implemented as the backbone algorithm for models such as Deep Boltzmann Machines~\citep{Salakhutdinov:2009:AISTATS}, 
latent Dirichlet allocations~\citep{Blei:2003:JMLR}, and factor graphs in general.
Given a set of random variables and a target distribution $\pi$, 
the Gibbs sampler iteratively updates one variable at a time according to the distribution $\pi$ conditioned on the values of all other variables.
If the ergodicity condition is met, then the Gibbs sampler eventually converges to the target distribution.

There are two ways to choose which variable to update
at the next iteration: 
(1) {\em Random Update}, where in each epoch (or round) one variable is picked uniformly at random with replacement; 
and (2) {\em Systematic Scan}, where in each epoch all variables are updated using some pre-determined order.
Although most theoretical development on analyzing Gibbs sampling deals with random updates \citep{Jerrum03,LPW06},
systematic scans are prevalent in real-world implementations due to their hardware-friendly nature (cache locality for factor graphs, SIMD for Deep Boltzmann Machines, etc.). 
It is natural to wonder, {\em whether using systematic scan, rather than random updates, would delay the mixing time, the number of iterations the Gibbs sampler requires to reach the target distribution.}

The mixing time of these two update strategies can differ by some high polynomial factors in either directions \citep{HDMR16,RR15}.
Even more pathological examples were constructed for non-Gibbs Markov chains such that systematic scan is not even ergodic whereas the random-update sampler is rapidly mixing~\citep{DGJ08}.
Indeed, even for a system as simple as the Ising model, a comparison result remains elusive \citep[Open problem 5, p.~300]{LPW06}.
As a consequence, theoretical results on rapidly mixing, such as \citep{BD97,MS13}, do not readily apply to the scan algorithms used in practice.

\subsection{Main results.}
In this paper, we bridge this gap between theory and system.
We focus on \emph{bipartite distributions}, 
in which variables can be divided into two partitions ---
conditioned on one of the partitions, variables from the other partition are mutually independent.
This bipartite structure arises naturally in practice, including Restricted/Deep Boltzmann Machines.
For a bipartite distribution, the de facto implementation is that in each epoch, 
we scan all variables from one of the partitions first, and then the other.
We call this the \emph{alternating-scan} sampler.
Note that in order to define a valid Markov chain, we have to consider systematic scans in epochs, in which all variables are updated once.
Our main theorem is the following.

\begin{theorem}[{\bf Main Theorem}] \label{thm:AS}
  For any bipartite distribution $\pi$,
  if the random-update Gibbs sampler is ergodic, 
  then so is the alternating-scan sampler. 
  Moreover, the relaxation time of the alternating-scan sampler
  (in terms of epochs)
  is no larger than that of the random-update one
  (in terms of variable updates).
\end{theorem}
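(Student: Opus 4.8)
The plan is to pass to the Hilbert space $L^2(\pi)$ and reduce both samplers to statements about orthogonal projections.

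\emph{Reducing an epoch to a product of two projections.} Write $V=V_1\sqcup V_2$ for the bipartition and $n=\abs{V}$. Scanning all of $V_1$ (in any within-layer order) and then all of $V_2$ implements the composition $P_{AS}=P_2 P_1$, where $P_j$ is the Markov operator that resamples every coordinate of $V_j$ from $\pi(\cdot\mid V_{3-j})$. Indeed, conditional independence within a layer makes a sweep of $V_1$ in any order $i_1,\dots,i_k$ draw each new value $x_{i_\ell}$ from $\pi\bigl(x_{i_\ell}\mid\text{everything else}\bigr)=\pi(x_{i_\ell}\mid V_2)$, independently of the old values and of one another; hence the $V_1$-sweep equals $P_1$ regardless of the order, and $P_1=\Ex_\pi[\,\cdot\mid\sigma(V_2)\,]$ is exactly the orthogonal projection in $L^2(\pi)$ onto functions of $V_2$; symmetrically $P_2=\Ex_\pi[\,\cdot\mid\sigma(V_1)\,]$. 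So one epoch of the alternating scan is the product of two self-adjoint idempotents, with $\operatorname{range}(P_1)\cap\operatorname{range}(P_2)$ equal to the constants (which follows from ergodicity of the random-update chain).

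\emph{Spectrum of the epoch chain.} Since $P_{AS}=P_2 P_1$ with $P_1,P_2$ self-adjoint idempotents, its nonzero spectrum equals that of the positive semidefinite self-adjoint operator $P_1 P_2 P_1=(P_2 P_1)^{*}(P_2 P_1)$; hence every eigenvalue of $P_{AS}$ is real and lies in $[0,1]$, the eigenvalue $1$ is simple (its eigenspace is $\operatorname{range}(P_1)\cap\operatorname{range}(P_2)$), and the relaxation time of the alternating scan equals $1/\gamma_{AS}$ with $\gamma_{AS}:=1-\lambda_2(P_{AS})$. Using $\inner{\pi}{P_2 g,g}=\norm{\pi}{P_2 g}^2$ for $g\in\operatorname{range}(P_1)$, together with $\norm{\pi}{g}^2=\norm{\pi}{P_2 g}^2+\norm{\pi}{(I-P_2)g}^2$, a short Rayleigh-quotient computation gives the variational formula
\[
\gamma_{AS}=\inf\Bigl\{\,\norm{\pi}{(I-P_2)g}^2\big/\norm{\pi}{g}^2\ :\ g\in\operatorname{range}(P_1),\ \Ex_\pi g=0\,\Bigr\},
\]
so $\gamma_{AS}$ measures how far a mean-zero function of $V_2$ is moved by a single resampling of $V_1$. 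Meanwhile the random-update operator $P_{RU}=\frac1n\sum_{i\in V}P_{(i)}$, with each $P_{(i)}=\Ex_\pi[\,\cdot\mid\sigma(V\setminus i)\,]$ an orthogonal projection, is itself positive semidefinite, so its relaxation time is exactly $1/\gamma_{RU}$ with $\gamma_{RU}=1-\lambda_2(P_{RU})=\inf_{\Ex_\pi f=0}\frac1n\sum_{i\in V}\norm{\pi}{(I-P_{(i)})f}^2\big/\norm{\pi}{f}^2$.

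\emph{The comparison.} Take any admissible test function $g$ for $\gamma_{AS}$; it depends only on $V_2$. For $i\in V_1$ we have $(I-P_{(i)})g=0$, since $g$ is $\sigma(V\setminus i)$-measurable; for $i\in V_2$,
\[
\norm{\pi}{(I-P_{(i)})g}^2=\Ex_\pi\operatorname{Var}(g\mid V\setminus i)\ \le\ \Ex_\pi\operatorname{Var}(g\mid V_1)=\norm{\pi}{(I-P_2)g}^2,
\]
because enlarging the conditioning from $\sigma(V_1)$ to $\sigma(V\setminus i)\supseteq\sigma(V_1)$ only decreases the expected conditional variance. Summing over $i\in V$ and dividing by $n$ gives $\frac1n\sum_{i\in V}\norm{\pi}{(I-P_{(i)})g}^2\le\frac{\abs{V_2}}{n}\norm{\pi}{(I-P_2)g}^2\le\norm{\pi}{(I-P_2)g}^2$; since the left-hand side is at least $\gamma_{RU}\norm{\pi}{g}^2$, we obtain $\norm{\pi}{(I-P_2)g}^2\ge\gamma_{RU}\norm{\pi}{g}^2$. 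Taking the infimum over such $g$ and invoking the variational formula yields $\gamma_{AS}\ge\gamma_{RU}>0$; in particular $P_{AS}$ is ergodic, and its relaxation time (in epochs) is $1/\gamma_{AS}\le 1/\gamma_{RU}$, the relaxation time of the random-update sampler (in variable updates). The genuinely non-routine step is the spectral one: pinning down the relaxation time of the \emph{non-reversible} epoch chain $P_{AS}$ as $1-\lambda_2(P_1 P_2)$ and extracting the clean variational formula — exactly where the bipartite hypothesis enters, each layer-sweep being an honest orthogonal projection rather than a mere contraction. Once that ``product of two projections'' picture is in hand, the comparison is the short second-moment estimate above, and what remains (triviality of $\operatorname{range}(P_1)\cap\operatorname{range}(P_2)$, absence of an eigenvalue $-1$) is routine bookkeeping that comes for free here.
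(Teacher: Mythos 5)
Your structural reduction is sound: each layer sweep is the conditional expectation $E_\pi[\cdot\mid\sigma(V_{3-j})]$, hence a self-adjoint idempotent, and your variational comparison correctly proves $\lambda_2(P_{AS})\le\lambda_2(P_{RU})$, i.e.\ $\gamma_{AS}\ge\gamma_{RU}$ (this is the paper's Lemma \ref{lem:updating-vertex} and Corollary \ref{cor:composition} in projection language, plus a Rayleigh-quotient argument in place of operator-norm manipulation). The genuine gap is your final identification ``the relaxation time of the alternating scan equals $1/\gamma_{AS}$.'' The epoch chain $P_{AS}$ is non-reversible (non-normal), so its eigenvalues alone do not canonically define a relaxation time, and the paper's definition \eqref{eqn:relaxation-non-reverse} is via the multiplicative reversibilization $R(P_{AS})=P_{AS}P_{AS}^*$; in your notation $R(P_{AS})=P_2P_1P_2$, which has the same nonzero spectrum as $P_{AS}$, so under that definition $T_{rel}(P_{AS})=\bigl(1-\sqrt{\lambda_2(P_{AS})}\bigr)^{-1}$, which exceeds your $1/\gamma_{AS}$ by the factor $1+\sqrt{\lambda_2(P_{AS})}$, i.e.\ by up to $2$. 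Consequently your linear bound $\lambda_2(P_{AS})\le\lambda_2(P_{RU})$ only yields $T_{rel}(P_{AS})\le\bigl(1+\sqrt{\lambda_2(P_{RU})}\bigr)\,T_{rel}(P_{RU})\le 2\,T_{rel}(P_{RU})$, which is weaker than the stated theorem; and it is precisely the definition via $R(P_{AS})$ that feeds into Theorem \ref{thm:relaxation-reverse} and Corollary \ref{cor:AS}, so you cannot simply substitute $1/(1-\lambda_2)$ without a separate convergence argument.

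The paper avoids this loss by proving the quadratically stronger estimate $\norm{\pi}{R(P_{AS})-S_{\pi}}\le\norm{\pi}{P_{RU}-S_{\pi}}^2$ (Lemma \ref{lem:scan-vs-random}), via the insertion identity $P_{AS1}(P_{RU}-S_{\pi})P_{AS2}=P_{AS}-S_{\pi}$: a redundant random update slipped between the two sweeps is absorbed by them. In your framework the same strengthening is one line: for mean-zero $g\in\operatorname{range}(P_1)$, every $V_1$-update fixes $g$, and $P_2T_y=P_2$ for $y\in V_2$ by the tower property, so $P_2g=P_2(P_{RU}-S_{\pi})g$ and hence $\norm{\pi}{P_2g}\le\norm{\pi}{P_{RU}-S_{\pi}}\,\norm{\pi}{g}$. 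This gives $\lambda_2(P_{AS})=\sup\norm{\pi}{P_2g}^2/\norm{\pi}{g}^2\le\norm{\pi}{P_{RU}-S_{\pi}}^2$ instead of your first-power bound, and plugging it into \eqref{eqn:relaxation-non-reverse} yields exactly $T_{rel}(P_{AS})\le T_{rel}(P_{RU})$. With that replacement (and keeping your spectral bookkeeping for ergodicity, which is fine), your product-of-projections formulation becomes a correct and rather transparent repackaging of the paper's argument; as written, it proves a factor-$2$-weaker statement than claimed.
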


The relaxation time (inverse spectral gap) measures the mixing time from a ``warm'' start.
It is closely related to the (total variation distance) mixing time,
and governs mixing times under other metrics as well~\citep{LPW06}.
Through standard inequalities, Theorem \ref{thm:AS} also implies a comparison result in terms of mixing times, Corollary \ref{cor:AS}.
As we count epochs in Theorem \ref{thm:AS}, the alternating-scan sampler is implicitly slower by a factor of $n$, the number of variables.
We also show that Theorem \ref{thm:AS} is asymptotically tight via Example \ref{exm:bipartite-complete}.
Thus this implicit factor $n$ slowdown cannot be improved in general.

More specifically, we summarize our contribution as follows.

\begin{enumerate}
  \item In Section~\ref{sec:AS}, we establish Theorem \ref{thm:AS}. 
    By focusing on bipartite systems, we are able to obtain much stronger result than recent studies in the more general setting~\citep{HDMR16}.
    We note that standard Markov chain comparison results, such as \citep{DS93}, do not seem to fit into our setting.
    Instead, we give a novel analysis via estimates of operator norms of certain carefully defined matrices.
    One key observation is to consider an artificial but equivalent variant of the alternating-scan sampler,
    where we insert an extra random update between updating variables from the two partitions.
    This does not change the algorithm since the extra random update is either redundant with the updates in the first partition or with those in the second.
  \item In Section~\ref{sec:implication}, 
    we discuss bipartite distributions that arise naturally in machine learning. 
    In particular, our result is a rigorous justification of the popular layer-wise scan sampler for Deep Boltzmann Machines~\citep{Salakhutdinov:2009:AISTATS}.
    Our result also applies to other models such as Restricted Boltzmann Machines~\citep{Smolensky:1986:IPD} and, more generally, any bipartite factor graph.
  \item In Section~\ref{sec:experiments}, 
    we conduct experiments to verify our theory and analyze the gap between our worst case theoretical bound and numerical evidences. 
    We observe that in the rapidly mixing regime, the alternating-scan sampler is usually faster than the random-update one,
    whereas in the slow mixing regime, the alternating-scan sampler can be slower by a factor $O(n)$.
    We hope these observations shed some light on more fine-grained comparison bounds in the future.
\end{enumerate}

\section{Related Work}

Probably the most relevant work is the recent
analysis conducted by \cite{HDMR16} about
the impact of the scan order on the mixing time of 
the Gibbs sampling. They (1) constructed a variety of 
models in which the scan order can change the mixing 
time significantly in several different ways and (2)
proved comparison results on the mixing time between random updates and a variant of systematic scans where ``lazy'' 
moves are allowed. In this paper, we focus on
a more specific case, i.e., bipartite systems,
and so our bound is stronger --- in fact, our bound
can be exponentially stronger when the underlying chain is torpidly mixing.
Moreover, our result does not modify the standard scan algorithm.

Another related work is the recent analysis by
\cite{Tosh16} considering the mixing time 
of an alternating sampler for the Restricted Boltzmann Machine (RBM). 
Tosh showed that, under Dobrushin-like conditions~\citep{Dobrushin70},
i.e., when the weights in the RBM are sufficiently small, the alternating sampler mixes rapidly.
For models other than RBM, mixing time results for systematic scans are relatively rare.
Known examples are usually restricted to very specific models~\citep{DR00} 
or under conditions to ensure that the correlations 
are sufficiently weak~\citep{DGJ06,Hayes06,DGJ08}.
Typical conditions of this sort are variants of the classical Dobrushin 
condition~\citep{Dobrushin70}. 
See also \citep{BCSV18} for very recent results on analyzing the alternating scan sampler (among others) on the 2D grid 
under conditions of the Dobrushin-type.
In contrast, our work focuses on the relative
performance between random updates and systematic scan,
and does not rely on Dobrushin-like conditions.
In particular, our results extend to the torpid mixing regime as well as the rapid mixing one.

Our primary focus is on discrete state spaces.
The scan order question has also been asked and explored in general state spaces.
Despite a long line of research \citep{Has70,Pes73,CPS90,LWK95,RS97,RR97,Tie98,MRJ14,RR15,And16}, 
to the best of our knowledge, no decisive answer is known.

Another line of related research is about the scan order in \emph{stochastic gradient descent}~\citep{RR12,Sha16,GOP17}.
Our setting in this paper is very different and the techniques are different as well.

\section{Preliminaries on Markov Chains}

Let $\Omega$ be a discrete state space and $P$ be a $\abs{\Omega}$-by-$\abs{\Omega}$ stochastic matrix describing a (discrete time) Markov chain on $\Omega$.
The matrix $P$ is also called the transition matrix or the kernel of the chain.
Thus, $P^t(\sigma_0,\cdot)$ is the distribution of the chain at time $t$ starting from $\sigma_0$.
Let $\pi(\cdot)$ be a stationary distribution of $P$.
The Markov chain defined by $P$ is \emph{reversible} (with respect to $\pi(\cdot)$) if $P$ satisfies the detailed balance condition:
\begin{align}\label{eqn:detailed-balance}
  \pi(\sigma)P(\sigma,\tau) = \pi(\tau)P(\tau,\sigma)
\end{align}
for any $\sigma,\tau\in \Omega$.
We note that in general the systematic scan sampler is not reversible.
The Markov chain is called \emph{irreducible} if $P$ connects the whole state space $\Omega$,
namely, for any $\sigma,\tau\in\Omega$, there exists $t$ such that $P^t(\sigma,\tau)>0$.
It is called \emph{aperiodic} if $\gcd\{t>0:P^t(\sigma,\sigma)>0\}=1$ for every $\sigma\in\Omega$.
We call $P$ \emph{ergodic} if it is both irreducible and aperiodic.
An ergodic Markov chain converges to its unique stationary distribution \citep{LPW06}.

The \emph{total variation} distance $\norm{TV}{\cdot}$ for two distributions $\mu$ and $\nu$ on $\Omega$ is defined as
\begin{align*}
  \norm{TV}{\mu-\nu} = \max_{A\subset\Omega}\abs{\mu(A)-\nu(A)}=\frac{1}{2}\sum_{\sigma\in\Omega}\abs{\mu(\sigma)-\nu(\sigma)}.
\end{align*}
The mixing time $T_{mix}$ is defined as 
\begin{align*}
  T_{mix}(P):=\min\left\{t\ge 0:\max_{\sigma\in \Omega}\norm{TV}{P^t(\sigma,\cdot)-\pi}\le \frac{1}{2e}\right\},
\end{align*}
where the choice of the constant $\frac{1}{2e}$ is merely for convenience and is not significant \citep{LPW06}.

When $P$ is ergodic and reversible, the eigenvalues $(\xi_i)_{i\in[\abs{\Omega}]}$ of $P$ satisfies $-1<\xi_i\le 1$,
and additionally, $Pf=f$ if and only if $f$ is constant (see \citep[Lemma 12.1]{LPW06}).
The \emph{spectral gap} of $P$ is defined by
\begin{align}\label{eqn:spectral-gap}
  \lambda(P):=1-\max\{\abs{\xi}:\xi\text{ is an eigenvalue of $P$}\notag\\
  \hspace{1cm}\text{ and $\xi\neq 1$}\}.
\end{align}
The \emph{relaxation time} for a reversible $P$
is defined as 
\begin{align}  \label{eqn:relaxation-time}
  T_{rel}(P):=\lambda(P)^{-1}.
\end{align}
The relaxation time and the mixing time differ by at most a factor of $\log \left(  \frac{2e}{\pi_{min}}\right)$ where $\pi_{min}=\min_{\sigma\in \Omega}\pi(\sigma)$, 
shown by the following theorem (see, for example, \citep[Theorem 12.4 and 12.5]{LPW06}).
In fact, the relaxation time governs mixing properties with respect to metrics other than the total variation distance as well.
See \citep[Chapter 12]{LPW06} for more details.

\begin{theorem}\label{thm:relaxation}
  Let $P$ be the transition matrix of a reversible and ergodic Markov chain with the state space $\Omega$ and the stationary distribution $\pi$.
  Then
  \begin{align*}
    T_{rel}(P)-1\le T_{mix}(P)\le T_{rel}(P) \log \left(  \frac{2e}{\pi_{min}}\right),
  \end{align*}
  where $\pi_{min}=\min_{\sigma\in \Omega}\pi(\sigma)$.
\end{theorem}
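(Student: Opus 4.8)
The plan is to run the classical spectral argument for reversible ergodic chains; in fact the statement is exactly the combination of Theorems~12.4 and~12.5 of \citep{LPW06} cited just above, so the ``proof'' here will likely be that citation. Still, here is how the argument goes. Since $P$ is reversible with respect to $\pi$, it is self-adjoint on the Hilbert space $L^2(\pi)$ equipped with $\inner{\pi}{f,g}=\sum_{\sigma\in\Omega}\pi(\sigma)f(\sigma)g(\sigma)$, so there is an orthonormal basis of real eigenfunctions $f_1=\mathbf{1},f_2,\dots,f_{\abs{\Omega}}$ with eigenvalues $1=\xi_1>\xi_2\ge\cdots\ge\xi_{\abs{\Omega}}>-1$; ergodicity is precisely what makes the eigenvalue $1$ simple (eigenspace the constants) and keeps $-1$ out of the spectrum. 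Expanding the kernel in this basis yields $P^t(\sigma,\tau)/\pi(\tau)=\sum_{j=1}^{\abs{\Omega}}\xi_j^t f_j(\sigma)f_j(\tau)$; specialising to $t=0$, $\tau=\sigma$ gives the identity $\sum_j f_j(\sigma)^2=1/\pi(\sigma)$, which both directions use.

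For the upper bound, subtract the $j=1$ term to obtain $P^t(\sigma,\tau)/\pi(\tau)-1=\sum_{j\ge2}\xi_j^t f_j(\sigma)f_j(\tau)$ and bound the $\chi^2$-divergence from $\pi$:
\begin{align*}
  \sum_{\tau}\pi(\tau)\Bigl(\tfrac{P^t(\sigma,\tau)}{\pi(\tau)}-1\Bigr)^2
  &=\sum_{j\ge2}\xi_j^{2t}f_j(\sigma)^2 \\
  &\le(1-\lambda(P))^{2t}\sum_{j\ge2}f_j(\sigma)^2
  \le(1-\lambda(P))^{2t}/\pi_{min}.
\end{align*}
Cauchy--Schwarz (with weights $\pi(\tau)$) bounds $2\norm{TV}{P^t(\sigma,\cdot)-\pi}$ by the square root of the left-hand side, so $\norm{TV}{P^t(\sigma,\cdot)-\pi}\le\tfrac12(1-\lambda(P))^t\pi_{min}^{-1/2}\le\tfrac12 e^{-\lambda(P)t}\pi_{min}^{-1/2}$. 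Requiring the last quantity to be at most $1/(2e)$ and using $T_{rel}(P)=\lambda(P)^{-1}$ shows that $t\ge T_{rel}(P)\bigl(1+\tfrac12\log(1/\pi_{min})\bigr)$ suffices, and since $\pi_{min}\le1$ this is at most $T_{rel}(P)\log(2e/\pi_{min})$; hence $T_{mix}(P)\le T_{rel}(P)\log(2e/\pi_{min})$.

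For the lower bound, pick an eigenvalue $\xi\ne1$ with $\abs{\xi}=1-\lambda(P)$ and a corresponding eigenfunction $f$ normalised so $\norm{\infty}{f}=1$; orthogonality of $f$ to $\mathbf{1}$ means $\Ex_\pi[f]=0$. Since $P^tf=\xi^t f$, for every $\sigma$,
\begin{align*}
  \abs{\xi}^{t}\,\abs{f(\sigma)}
  &=\abs{(P^tf)(\sigma)-\Ex_\pi[f]}
  =\Bigl|\sum_\tau\bigl(P^t(\sigma,\tau)-\pi(\tau)\bigr)f(\tau)\Bigr| \\
  &\le 2\norm{\infty}{f}\,\norm{TV}{P^t(\sigma,\cdot)-\pi}.
\end{align*}
Taking $\sigma$ with $\abs{f(\sigma)}=1$ and $t=T_{mix}(P)$ gives $(1-\lambda(P))^{T_{mix}(P)}\le1/e$, hence $T_{mix}(P)\ge\bigl(\log\tfrac{1}{1-\lambda(P)}\bigr)^{-1}\ge\tfrac{1-\lambda(P)}{\lambda(P)}=T_{rel}(P)-1$, using $\log\tfrac{1}{1-x}\le\tfrac{x}{1-x}$ for $x\in(0,1)$ (and the bound is trivial when $\lambda(P)=1$).

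There is no real obstacle: the argument is short and textbook, which is exactly why the paper defers to \citep{LPW06}. The two spots that deserve a moment's care are (i) the constant bookkeeping in the upper bound, i.e.\ checking $1+\tfrac12\log(1/\pi_{min})\le\log(2e/\pi_{min})$, which holds because $\log(1/\pi_{min})\ge0$ (equivalently $\sqrt{\pi_{min}}\ge\pi_{min}$); and (ii) making explicit where ergodicity enters, namely to guarantee that the eigenvalue $1$ is simple with eigenspace the constants and that $-1$ is not an eigenvalue, so that the sums over $j\ge2$ genuinely measure the distance to stationarity.
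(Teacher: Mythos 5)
Your proof is correct, and it is exactly the classical spectral argument (eigenfunction expansion of the reversible kernel, $\chi^2$/Cauchy--Schwarz for the upper bound, a slowest non-trivial eigenfunction for the lower bound) from Theorems~12.4 and~12.5 of \citep{LPW06}, which is precisely what the paper cites in lieu of a proof. Nothing further is needed beyond, at most, a remark that the integer-rounding of $t$ is absorbed by the slack $T_{rel}\bigl(\log 2+\tfrac12\log\pi_{min}^{-1}\bigr)\ge 1$ when $\abs{\Omega}\ge 2$.
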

The factor $\log\pi_{min}^{-1}$ is usually linear in $n$, the number of variables,
in the context of Gibbs sampling which is our primary focus later.
Theorem~\ref{thm:relaxation} is tight, and there is no good way of avoiding losing this $\log\pi_{min}^{-1}$ factor in general,
with the spectral method.

Unfortunately, the systematic-scan sampler is not reversible, and therefore Theorem \ref{thm:relaxation} does not apply.
Instead, we use an extension developed by \cite{Fill91}.
For a non-reversible transition matrix $P$,
let the \emph{multiplicative reversiblization} be $R(P):= P P^*$, where $P^*$ is the \emph{adjoint} of $P$ defined as
\begin{align}  \label{eqn:adjoint}
  P^*(\sigma,\tau) = \frac{\pi(\tau)P(\tau,\sigma)}{\pi(\sigma)}.  
\end{align}
Then $R(P)$ is reversible.
Let the \emph{relaxation time} for a (not necessarily reversible) $P$ be
\begin{align}  \label{eqn:relaxation-non-reverse}
  T_{rel}(P):= \frac{1}{1 - \sqrt{1- \lambda(R(P))}}.
\end{align}
In particular, if $P$ is reversible, then \eqref{eqn:relaxation-non-reverse} recovers \eqref{eqn:relaxation-time}
(see Proposition \ref{prop:non-reverse-relax}).
In general, the multiplicative reversibilization mixes similarly to the original non-reversible chain.
See \citep{Fill91} for more details.

The following theorem is a simple consequence of \cite[Theorem 2.1]{Fill91}.

\begin{theorem} \label{thm:relaxation-reverse}
  Let $P$ be the transition matrix of an ergodic Markov chain with the state space $\Omega$ and the stationary distribution $\pi$.
  Then
  \begin{align*}
    T_{mix}(P)\le \log \left(  \frac{4e^2}{\pi_{min}}\right) T_{rel}(P),
  \end{align*}
  where $\pi_{min}=\min_{\sigma\in \Omega}\pi(\sigma)$.
\end{theorem}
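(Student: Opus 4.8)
The plan is to deduce the bound from Fill's eigenvalue estimate for non-reversible chains and then optimize the constant. Recall that $R(P)=PP^*$ is reversible and positive semidefinite with respect to $\pi$, with $1$ as its largest eigenvalue (eigenvector: the constants) and $1-\lambda(R(P))$ as its second largest. The starting point is \cite[Theorem~2.1]{Fill91}, which in the present notation asserts that for every $\sigma_0\in\Omega$ and every integer $t\ge1$,
\begin{align*}
  4\,\norm{TV}{P^t(\sigma_0,\cdot)-\pi}^2\;\le\;\frac{1-\pi(\sigma_0)}{\pi(\sigma_0)}\,\bigl(1-\lambda(R(P))\bigr)^{t}.
\end{align*}
If a self-contained proof is preferred, this can be reproved by working in $L^2(\pi)$: the left side is at most the $\chi^2$-divergence $\sum_\sigma (P^t(\sigma_0,\sigma)-\pi(\sigma))^2/\pi(\sigma)$ by Cauchy--Schwarz; this divergence equals $\norm{\pi}{(P^*)^t g}^2$ for the mean-zero function $g=\delta_{\sigma_0}/\pi-\mathbf 1$, since $P^t(\sigma_0,\cdot)/\pi-\mathbf 1=(P^*)^t g$; the operator norm of $(P^*)^t$ on the orthogonal complement of the constants is at most $\bigl(1-\lambda(R(P))\bigr)^{t/2}$ by submultiplicativity together with the identity $\|P^*\|^2=\|PP^*\|$; and $\norm{\pi}{g}^2=(1-\pi(\sigma_0))/\pi(\sigma_0)$.

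Given this bound, the rest is arithmetic. Since $P$ is ergodic we have $\lambda(R(P))\in(0,1]$, so by the definition \eqref{eqn:relaxation-non-reverse} we may write $\sqrt{1-\lambda(R(P))}=1-1/T_{rel}(P)$ with $T_{rel}(P)\ge1$, whence $\bigl(1-\lambda(R(P))\bigr)^{t}=\bigl(1-1/T_{rel}(P)\bigr)^{2t}\le\exp\bigl(-2t/T_{rel}(P)\bigr)$. Substituting this and the crude bound $\tfrac{1-\pi(\sigma_0)}{\pi(\sigma_0)}\le1/\pi_{min}$ into the displayed inequality, I would check that $4\,\norm{TV}{P^t(\sigma_0,\cdot)-\pi}^2\le 1/e^2$ as soon as $t\ge T_{rel}(P)\log\bigl(e/\sqrt{\pi_{min}}\bigr)$, and hence $T_{mix}(P)\le\lceil T_{rel}(P)\log(e/\sqrt{\pi_{min}})\rceil$. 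Bounding the ceiling by $+1$, using $T_{rel}(P)\ge1$ to fold that $+1$ into the logarithm (turning $e$ into $e^2$), and bounding $1/\sqrt{\pi_{min}}$ by $1/\pi_{min}$, one gets $T_{mix}(P)\le T_{rel}(P)\log(e^2/\sqrt{\pi_{min}})\le T_{rel}(P)\log(4e^2/\pi_{min})$, as required; the constant $4$ in the statement is exactly the slack needed to absorb the integer rounding.

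There is no real obstacle once Fill's inequality is invoked; the two points that need attention are conceptual rather than computational. First, because the systematic-scan sampler is not reversible, we cannot speak of the eigenvalues of $P$ directly and must instead control the reversibilization $R(P)=PP^*$ --- this is precisely why \eqref{eqn:relaxation-non-reverse} and \cite{Fill91} enter. Second, the square root in the definition of $T_{rel}(P)$ must be matched to the decay rate in Fill's estimate, namely $(1-\lambda(R(P)))^{t/2}$ in total-variation distance (equivalently $(1-\lambda(R(P)))^{t}$ in $\chi^2$-divergence); invoking the version of \cite[Theorem~2.1]{Fill91} with this exponent is what makes $T_{rel}(P)$, rather than $\lambda(R(P))^{-1}$, appear in the final bound. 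The residual items --- that $1$ is a simple eigenvalue of $R(P)$ for ergodic $P$, and the bookkeeping of the multiplicative constant through the rounding step --- are routine.
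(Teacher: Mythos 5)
Your proof is correct and takes essentially the same route as the paper: both arguments rest on Fill's Theorem~2.1 for the multiplicative reversibilization $R(P)=PP^*$, followed by elementary arithmetic with the definition \eqref{eqn:relaxation-non-reverse} (the paper simply sets $T=\log(4e^2/\pi_{\min})\,T_{rel}(P)$ and checks via $\norm{TV}{P^t(\sigma,\cdot)-\pi}^2\le(1-\lambda(R(P)))^t/\pi(\sigma)$ that the distance drops below $1/(2e)$, whereas you keep Fill's sharper constants, handle the integer rounding explicitly, and your optional $\chi^2$/operator-norm re-derivation of Fill's bound is also sound). One caveat: your side claim that ergodicity of $P$ forces $1$ to be a simple eigenvalue of $R(P)$ (hence $\lambda(R(P))\in(0,1]$) is false in general --- for the ergodic chain on $\{0,1,2\}$ with $P(0,0)=P(0,1)=1/2$, $P(1,2)=P(2,0)=1$, state $1$ is absorbing for $PP^*$, so $\lambda(R(P))=0$ --- but this does not harm the argument, since in that degenerate case $T_{rel}(P)=\infty$ by \eqref{eqn:relaxation-non-reverse} and the stated inequality holds vacuously; a one-line remark to that effect would close the point.
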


Note that our definition of relaxation times \eqref{eqn:relaxation-non-reverse} for non-reversible Markov chains yields asymptotically the same upper bound in Theorem \ref{thm:relaxation}.

\begin{proof}[Proof of Theorem \ref{thm:relaxation-reverse}]
  We first restate \cite[Theorem 2.1]{Fill91} (note that the norm in \citep{Fill91} is twice the total variation distance):
  \begin{align}\label{eqn:Fill}
    \norm{TV}{P^t(\sigma,\cdot)-\pi}^2\le \frac{\left(1- \lambda(R(P)) \right)^t}{\pi(\sigma)}.
  \end{align}
  Let $\lambda:= \lambda(R(P))$ and $T:=\log\left( \frac{4e^2}{\pi_{\min}} \right) T_{rel}(P)
  = \frac{1}{1 - \sqrt{1- \lambda}}\log\left( \frac{4e^2}{\pi_{\min}} \right)$.
  Then it is easy to verify that
  \begin{align*}
    T \ge \frac{2}{\lambda}\log\left( \frac{2e}{\sqrt{\pi_{\min}}} \right)
  \end{align*}
  and by \eqref{eqn:Fill}, we have that
  \begin{align*}
    \max_{\sigma\in\Omega}\norm{TV}{P^T(\sigma,\cdot)-\pi} & \le \frac{\left(1- \lambda \right)^{T/2}}{\sqrt{\pi_{\min}}}\\
    & \le \frac{\left(1- \lambda \right)^{\lambda^{-1}\log\left( \frac{2e}{\sqrt{\pi_{\min}}} \right)}}{\sqrt{\pi_{\min}}} \\
    & \le \frac{e^{-\log\left( \frac{2e}{\sqrt{\pi_{\min}}} \right)}}{\sqrt{\pi_{\min}}}\\
    & = \frac{1}{2e}.
  \end{align*}
  In other words,
  \begin{align*}
    T_{mix}(P)\le T & = \log\left( \frac{4e^2}{\pi_{\min}} \right)T_{rel}(P). \qedhere
  \end{align*}
\end{proof}

\subsection{Operator Norms and the Spectral Gap}

We also view the transition matrix $P$ as an operator that mapping functions to functions.
More precisely, let $f$ be a function $f:\Omega\rightarrow \mathbb{R}$ and $P$ acting on $f$ is defined as
\begin{align*}
  Pf(x):=\sum_{y\in\Omega}P(x,y)f(y).
\end{align*}
This is also called the \emph{Markov operator} corresponding to $P$.
We will not distinguish the matrix $P$ from the operator $P$ as it will be clear from the context.
Note that $Pf(x)$ is the expectation of $f$ with respect to the distribution $P(x,\cdot)$.
We can regard a function $f$ as a column vector in $\mathbb{R}^{\Omega}$,
in which case $Pf$ is simply matrix multiplication.
Recall \eqref{eqn:adjoint} and $P^*$ is also called the \emph{adjoint operator} of $P$.
Indeed, $P^*$ is the (unique) operator that satisfies $\inner{\pi}{f,Pg} = \inner{\pi}{P^*f,g}$.
It is easy to verify that if $P$ satisfies the detailed balanced condition \eqref{eqn:detailed-balance},
then $P$ is \emph{self-adjoint}, namely $P=P^*$.

The Hilbert space $L_2(\pi)$ is given by endowing $\mathbb{R}^{\Omega}$ with the inner product
\begin{align*}
  \inner{\pi}{f,g}:=\sum_{x\in\Omega}f(x)g(x)\pi(x), 
\end{align*}
where $f,g\in\mathbb{R}^{\Omega}$.
In particular, the norm in $L_2(\pi)$ is given by
\begin{align*}
  \norm{\pi}{f} := \inner{\pi}{f,f}.
\end{align*}

The spectral gap \eqref{eqn:spectral-gap} can be rewritten in terms of the operator norm of $P$, 
which is defined by 
\begin{align*}
  \norm{\pi}{P}:=\max_{\norm{\pi}{f}\neq 0} \frac{\norm{\pi}{Pf}}{\norm{\pi}{f}}.
\end{align*}
Indeed, the operator norm equals the largest eigenvalue (which is just $1$ for a transition matrix $P$),
but we are interested in the second largest eigenvalue.
Define the following operator
\begin{align}  \label{eqn:S_pi}
  S_{\pi}(\sigma,\tau) := \pi(\tau).
\end{align}
It is easy to verify that $S_{\pi}f(\sigma) = \inner{\pi}{f,\mathbf{1}}$ for any $\sigma$.
Thus, the only eigenvalues of $S_{\pi}$ are $0$ and $1$,
and the eigenspace of eigenvalue $0$ is $\{f\in L_2(\pi):\inner{\pi}{f,\mathbf{1}}=0\}$.
This is exactly the union of eigenspaces of $P$ excluding the eigenvalue $1$.
Hence, the operator norm of $P-S_{\pi}$ equals the second largest eigenvalue of $P$, namely,
\begin{align}\label{eqn:spectral-operator}
  \lambda(P) = 1- \norm{\pi}{P-S_{\pi}}.
\end{align}
The expression in \eqref{eqn:spectral-operator} can be found in, for example, \cite[Eq.~(2.8)]{Ullrich:dissertation}.
In particular, using \eqref{eqn:spectral-operator}, 
we show that the definition \eqref{eqn:relaxation-non-reverse} coincides with \eqref{eqn:relaxation-time} when $P$ is reversible.

\begin{proposition}\label{prop:non-reverse-relax}
  Let $P$ be the transition matrix of a reversible matrix with the stationary distribution $\pi$. 
  Then 
  \begin{align*}
    \frac{1}{\lambda(P)} = \frac{1}{1 - \sqrt{1- \lambda(R(P))}}.
  \end{align*}
\end{proposition}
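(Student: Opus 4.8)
The plan is to reduce everything to the operator-norm identity \eqref{eqn:spectral-operator}, applied to both $P$ and $R(P)$, together with the elementary fact that $\norm{\pi}{A^2} = \norm{\pi}{A}^2$ for any self-adjoint operator $A$ on the finite-dimensional Hilbert space $L_2(\pi)$. Since $P$ is reversible it is self-adjoint, so $P^* = P$ and hence $R(P) = P P^* = P^2$.

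First I would record the basic algebra of $S_{\pi}$ from \eqref{eqn:S_pi}. One checks directly that $S_{\pi} f \equiv \inner{\pi}{f,\mathbf{1}}$ (a constant function), from which $S_{\pi}^2 = S_{\pi}$, and $S_{\pi}$ is self-adjoint because $\inner{\pi}{S_{\pi} f, g} = \inner{\pi}{f,\mathbf{1}}\,\inner{\pi}{g,\mathbf{1}}$ is symmetric in $f$ and $g$. Moreover $P S_{\pi} = S_{\pi}$ since every row of $P$ sums to $1$, and $S_{\pi} P = S_{\pi}$ since $P^*\mathbf{1} = \mathbf{1}$ (equivalently, $\pi$ is stationary). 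Expanding the square and using these identities gives
\begin{align*}
  (P - S_{\pi})^2 = P^2 - P S_{\pi} - S_{\pi} P + S_{\pi}^2 = P^2 - S_{\pi} = R(P) - S_{\pi}.
\end{align*}

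Next, since $P$ and $S_{\pi}$ are both self-adjoint, so is $P - S_{\pi}$, and therefore $\norm{\pi}{R(P) - S_{\pi}} = \norm{\pi}{(P - S_{\pi})^2} = \norm{\pi}{P - S_{\pi}}^2$. Note also that $R(P) = P P^*$ is reversible with respect to $\pi$, so \eqref{eqn:spectral-operator} applies to it as well; combining it with the same identity for $P$ yields
\begin{align*}
  1 - \lambda(R(P)) = \norm{\pi}{R(P) - S_{\pi}} = \norm{\pi}{P - S_{\pi}}^2 = (1 - \lambda(P))^2.
\end{align*}
Finally, because $1 - \lambda(P) = \norm{\pi}{P - S_{\pi}} \ge 0$, taking the nonnegative square root gives $\sqrt{1 - \lambda(R(P))} = 1 - \lambda(P)$, hence $1 - \sqrt{1 - \lambda(R(P))} = \lambda(P)$; taking reciprocals is the claimed equality, and it confirms that \eqref{eqn:relaxation-non-reverse} reduces to \eqref{eqn:relaxation-time} in the reversible case.

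There is no serious obstacle here; this is a short computation. The only point that deserves a word of care is the self-adjoint norm-squaring step $\norm{\pi}{A^2} = \norm{\pi}{A}^2$, which I would justify either by the spectral theorem for the self-adjoint operator $A = P - S_{\pi}$ on $L_2(\pi)$, or from the general identity $\norm{\pi}{A}^2 = \norm{\pi}{A^* A}$; this is also the place (besides $R(P) = P^2$) where reversibility of $P$ is genuinely invoked.
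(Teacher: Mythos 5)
Your proposal is correct and follows essentially the same route as the paper: both expand $R(P)-S_{\pi}$ as $(P-S_{\pi})(P-S_{\pi})^*$ (for you, $(P-S_{\pi})^2$, which coincides since $P-S_{\pi}$ is self-adjoint) using $PS_{\pi}=S_{\pi}P=S_{\pi}^2=S_{\pi}$, then apply \eqref{eqn:spectral-operator} to both $P$ and $R(P)$ and the norm identity $\norm{\pi}{AA^*}=\norm{\pi}{A}^2$ to get $1-\lambda(R(P))=(1-\lambda(P))^2$. Your explicit remark about taking the nonnegative square root is a small point the paper leaves implicit in ``rearranging the terms,'' but it is the same argument.
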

\begin{proof}
  Since $P$ is reversible, $P$ is self-adjoint, namely, $P^*=P$.
  Hence $\left( P-S_{\pi} \right)^* = P^* - S_{\pi}$ and
  \begin{align*}
    \left( P-S_{\pi} \right)\left( P - S_{\pi} \right)^* & = \left( P-S_{\pi} \right)\left( P^* - S_{\pi} \right) \\
    & = PP^* - PS_{\pi} - S_{\pi}P^*+S_{\pi}S_{\pi} \\
    & = PP^* - S_{\pi},
  \end{align*}
  where we use the fact that $PS_{\pi}=S_{\pi}P^*=S_{\pi}S_{\pi}=S_{\pi}$.
  It implies that
  \begin{align*}
    1- \lambda(R(P)) & = \norm{\pi}{R(P)-S_{\pi}} \tag{by \eqref{eqn:spectral-operator}}\\
    & = \norm{\pi}{PP^*-S_{\pi}} \\
    & = \norm{\pi}{\left( P-S_{\pi} \right)\left( P - S_{\pi} \right)^*} \\
    & = \norm{\pi}{ P-S_{\pi}}^2 \\
    & = \left( 1-\lambda(P) \right)^2.
  \end{align*}
  Rearranging the terms yields the claim.
\end{proof}

\section{Alternating Scan}\label{sec:AS}

In this section we describe the random update and the alternating scan sampler,
and compare these two.
Let $V=\{x_1,\dots,x_n\}$ be a set of variables where each variable takes values from some finite set $S$.
Let $\pi(\cdot)$ be a distribution defined on $S^V$.

Let $\sigma\in S^{V}$ be a configuration, namely $\sigma:V\to S$.
Let $\sigma^{x,s}$ be the configuration that agrees with $\sigma$ except at $x$, where $\sigma^{x,s}(x)=s$ for $s\in S$.
In other words, for any $y\in V$,
\begin{align*}
  \sigma^{x,s}(y) := 
  \begin{cases}
    \sigma(y) & \textrm{if $y\neq x$};\\
    s & \textrm{if $y = x$}.
  \end{cases}
\end{align*}

\begin{algorithm}[t!]
  \caption{Gibbs sampling with random updates}
  \label{alg:Gibbs}
  \begin{algorithmic}
    \REQUIRE {Starting configuration $\sigma=\sigma_0$}
    \FOR {$t= 1,\dots, T_{mix}$}
    \STATE {With probability $1/2$, do nothing.}
    \STATE {Otherwise, select a variable $x\in V$ uniformly at random.}
    \STATE {Set $\sigma \gets \sigma^{x,s}$ with probability $\frac{\pi(\sigma^{x,s})}{\sum_{t\in S}\pi(\sigma^{x,t})}$.}
    \ENDFOR
    \RETURN {$\sigma$}
  \end{algorithmic}
\end{algorithm}

The lazy\footnote{We choose to present the lazy sampler due to its popularity in theoretical analysis. Our arguments later in fact also apply to non-lazy samplers as well. See the remarks after the proof of Theorem \ref{thm:AS}.} Gibbs sampler is defined in Algorithm \ref{alg:Gibbs}.
Let $n=\abs{V}$ be the total number of variables.
The transition kernel $P_{RU}$ (where RU stands for ``random updates'') of the sampler in Algorithm \ref{alg:Gibbs} is defined as:
\begin{align}
  P_{RU}(\sigma,\tau) =
  \begin{cases}
    \frac{1}{2n} \cdot \frac{\pi(\sigma^{x,s})}{\sum_{t\in S}\pi(\sigma^{x,t})} & \textrm{ if } \tau\neq\sigma \textrm{ and there are $x\in V$} \\
    & \textrm{ and $s\in S$ such that } \tau = \sigma^{x,s};\\
    1/2 + \sum_{x\in V}\frac{1}{2n} \cdot \frac{\pi(\sigma^{x,\sigma(x)})}{\sum_{t\in S}\pi(\sigma^{x,t})} & \textrm{ if } \tau=\sigma;\\
    0 & \textrm{ otherwise,}
  \end{cases} \notag
\end{align}
where $\sigma,\tau$ are two configurations.
It is not hard to see, for example, by checking the detailed balance condition \eqref{eqn:detailed-balance}, 
that $\pi(\cdot)$ is the stationary distribution of $P_{RU}$.
Note that this Markov chain is {\it lazy}, i.e., it remains at its current state with probability at least~$1/2$.
This self-loop probability is higher than $1/2$ because when we update a variable there is positive probability of no change.
Lazy chains are often studied in the literature because of its technical conveniences.
The self-loop eliminates potential periodicity,
and all eigenvalues of a lazy chain are non-negative.
In the context of Gibbs sampling, these are merely artifacts of the available techniques 
and considering the lazy version is not really necessary \citep{RU13,DGU14}.
Our main result actually applies to both lazy and non-lazy versions.
See the remarks after the proof of Theorem \ref{thm:AS}.

Our main focus is bipartite distributions, defined next.
These distributions arise naturally from bipartite factor graphs, including, most notably, Restricted Boltzmann Machines.

\begin{definition}\label{def:bipartite}
  The joint distribution $\pi(\cdot)$ of random variables $V=\{x_1,\dots,x_n\}$ is \emph{bipartite},
  if $V$ can be partitioned into two sets $V_1$ and $V_2$ $($namely $V_1\cup V_2=V$ and $V_1\cap V_2=\emptyset)$,
  such that conditioned on any assignment of variables in $V_2$, all variables in $V_1$ are mutually independent, and vice versa.
\end{definition}

In the following we consider a particular systematic scan sampler for bipartite distributions.
For a configuration $\sigma$, let $\sigma_i := \sigma|_{V_i}$ be its projection on $V_i$ where $i=1,2$.
The alternating-scan sampler is given in Algorithm \ref{alg:Alternating},
where $n_1=\abs{V_1}$ and $n_2=\abs{V_2}$.

\begin{algorithm}[t!]
  \caption{Alternating-scan sampler}
  \label{alg:Alternating}
  \begin{algorithmic}
    \REQUIRE {Starting configuration $\sigma=\sigma_0$}
    \FOR {$t= 1,\dots, T_{mix}$}
    \FOR {$i=1,\dots,n_1$}
      \STATE {Set $\sigma \gets \sigma^{x_i,s}$ with probability $\frac{\pi(\sigma^{x_i,s})}{\sum_{t\in S}\pi(\sigma^{x_i,t})}$.}
    \ENDFOR
    \FOR {$j=1,\dots,n_2$}
      \STATE {Set $\sigma \gets \sigma^{y_j,s}$ with probability $\frac{\pi(\sigma^{y_j,s})}{\sum_{t\in S}\pi(\sigma^{y_j,t})}$.}
    \ENDFOR
    \ENDFOR
    \RETURN {$\sigma$}
  \end{algorithmic}
\end{algorithm}

In other words, the alternating-scan sampler sequentially resamples all variables in $V_1$, and then resamples all variables in $V_2$.
Note that since we are considering a bipartite distribution,
in order to resample $x_i \in V_1$, we only need to condition on $\sigma_2$.
In other words, for any $i\in[n_1]$, 
the distribution $\left( \frac{\pi(\sigma^{x_i,s})}{\sum_{t\in S}\pi(\sigma^{x_i,t})} \right)_{s\in S}$ that we draws from depends only on $\sigma_2$.
Similarly, resampling $y_j\in V_2$ only depends on $\sigma_1$.
We will denote the transition kernel of the alternating-scan sampler as $P_{AS}$,
where AS stands for ``alternating scan''.

An unusual feature of systematic-scan samplers (including the alternating-scan sampler) is that they are not reversible.
Namely the detailed balance condition \eqref{eqn:detailed-balance} does not in general hold.
This is because updating variables $x$ and $y$ in order is in general different from updating $y$ and $x$ in order.
This imposes a technical difficulty as most of the theoretical tools of analyzing these chains are not suitable for irreversible chains,
such as the Dirichlet form \citep{DS93} or conductance bounds~\citep{JSising,Sin92}.

On the other hand, the scan sampler is aperiodic.
Any potential state $\sigma$ of the chain must be in the state space $\Omega$.
Therefore $\pi(\sigma)>0$ and the probability of staying in $\sigma$ is strictly positive.
Moreover, if the Gibbs sampler is irreducible (namely the state space $\Omega$ is connected via single variable flips), then so is the scan sampler.
This is because any single variable update can be simulated in the scan sampler, with small but strictly positive probability.
Hence if the Gibbs sampler is ergodic, then so is the scan sampler.

We restate our main theorem here in formal terms.

{\renewcommand{\thetheorem}{\ref{thm:AS}}
\begin{theorem}
  For any bipartite distribution $\pi$,
  if $P_{RU}$ is ergodic, then so is $P_{AS}$.
  Moreover, 
  \begin{align*}
    T_{rel}(P_{AS})\le T_{rel}(P_{RU}).
  \end{align*}
\end{theorem}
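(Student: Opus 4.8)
The plan is to work entirely in $L_2(\pi)$ and exploit the hint from the introduction: insert an ``extra'' random-update step between the $V_1$-sweep and the $V_2$-sweep of the alternating scan, which changes nothing because a full resampling of $V_1$ already subsumes a single random update within $V_1$ (and likewise for $V_2$). Concretely, write $P_1$ for the operator that resamples all of $V_1$ in one shot (this is well-defined and idempotent, $P_1^2=P_1$, since by bipartiteness the $V_1$-variables are conditionally independent given $\sigma_2$, so resampling them one-by-one equals resampling them jointly and is insensitive to order), and similarly $P_2$ for $V_2$. Then $P_{AS}=P_1 P_2$ (or $P_2P_1$ depending on convention). Let $Q_i$ be the ``partial'' random-update operator that, conditioned on picking a variable, picks it uniformly from $V_i$; so $P_{RU}=\tfrac12 I+\tfrac12\big(\tfrac{n_1}{n}Q_1+\tfrac{n_2}{n}Q_2\big)$, where $Q_i$ resamples one uniformly random variable of $V_i$. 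The key algebraic identities to establish are $P_i Q_i = Q_i P_i = P_i$ for $i=1,2$: resampling all of $V_i$ after (or before) resampling one variable of $V_i$ is the same as just resampling all of $V_i$. These follow from conditional independence within $V_i$.

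The second ingredient is a comparison of $P_i$ with $Q_i$ in the operator-norm order on the subspace $L_2^0(\pi):=\{f:\inner{\pi}{f,\mathbf 1}=0\}$. Both $P_i$ and $Q_i$ are self-adjoint, and since $Q_i$ is an \emph{average} of single-site updates that are each ``coarser'' than $P_i$, while $P_i$ is a single-step idempotent projection, one has $\norm{\pi}{(P_i-S_\pi)f}\le\norm{\pi}{(Q_i-S_\pi)f}$; more precisely $P_i-S_\pi$ is an orthogonal projection (it projects onto functions of $\sigma_2$ alone, mean-zero), so $\norm{\pi}{P_i-S_\pi}\le 1$ trivially, but what I actually need is the sharper statement that $(P_i-S_\pi)\preceq (Q_i-S_\pi)$ as self-adjoint operators, i.e. $\inner{\pi}{(Q_i-P_i)f,f}\ge0$. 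This should come from writing $Q_i=\tfrac1{n_i}\sum_{x\in V_i}Q^{(x)}$ with each $Q^{(x)}$ a single-site resampling, noting $P_i=\prod_{x\in V_i}Q^{(x)}$ (in any order, by conditional independence), and using that each $Q^{(x)}$ is an orthogonal projection with $P_i=Q^{(x)}P_i$; a short computation then gives $Q_i\succeq P_i$ on $L_2^0$.

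Putting it together: using $P_{AS}=P_1P_2$ and the redundancy $P_1Q_1=P_1$, $P_2Q_2=P_2$, rewrite the alternating-scan operator (restricted to $L_2^0$, i.e. subtract $S_\pi$) so that it is sandwiched, in operator norm, by a product involving $Q_1$ and $Q_2$; then relate that product to $P_{RU}-S_\pi$. The cleanest route: show
\begin{align*}
  \norm{\pi}{P_{AS}-S_\pi}=\norm{\pi}{P_1(P_2-S_\pi)}=\norm{\pi}{P_1(Q_2-S_\pi)}\le\norm{\pi}{(Q_2-S_\pi)^{1/2}\cdots}
\end{align*}
— but rather than chase square roots, I would instead bound $\lambda(P_{AS})=1-\norm{\pi}{P_{AS}-S_\pi}$ from below by comparing $\norm{\pi}{P_{AS}-S_\pi}^2=\norm{\pi}{P_{AS}P_{AS}^*-S_\pi}$ (using the same identity as in Proposition~\ref{prop:non-reverse-relax}) against the spectral gap of $R(P_{AS})$, and then comparing $R(P_{AS})=P_{AS}P_{AS}^*=P_1P_2P_2^*P_1^*=P_1P_2P_1$ with an appropriate power/convex combination dominated by $P_{RU}$. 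The inequality $P_1P_2P_1\preceq$ (something like) $(2P_{RU}-I)$-type bound on $L_2^0$, combined with $\norm{\pi}{P_{RU}-S_\pi}=1-\lambda(P_{RU})$, should yield $1-\lambda(R(P_{AS}))\le(1-\lambda(P_{RU}))^2$, hence by \eqref{eqn:relaxation-non-reverse} exactly $T_{rel}(P_{AS})\le T_{rel}(P_{RU})$.

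The main obstacle I anticipate is the quantitative comparison step: showing that $R(P_{AS})=P_1P_2P_1$ has its top nontrivial eigenvalue controlled by $(1-\lambda(P_{RU}))^2$. The qualitative redundancy identities $P_iQ_i=P_i$ are easy; the delicate part is converting ``$P_i$ is the fully-mixed version and $Q_i$ a partially-mixed version within block $i$'' into a clean operator inequality and then threading it through the non-symmetric product $P_1P_2P_1$ without losing constants. I expect to need that $P_1,P_2$ are \emph{orthogonal projections} (not just self-adjoint contractions) onto the subspaces of functions depending only on $\sigma_2$, resp. $\sigma_1$ — this is where bipartiteness is essential and this is what makes $P_1P_2P_1$ tractable (its nontrivial spectrum is governed by the principal angles between these two subspaces, equivalently by a correlation/$\chi^2$-type quantity between $V_1$ and $V_2$ under $\pi$). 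Tying that same quantity to $\lambda(P_{RU})$ via the redundancy identities is the crux. The ergodicity claim is routine and already sketched in the text (aperiodicity from positive self-loops, irreducibility by simulating single flips), so I would dispose of it in a sentence before the spectral argument.
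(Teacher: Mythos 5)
There is a genuine gap, and you flag it yourself: everything hinges on the ``quantitative comparison step'' relating the spectrum of $R(P_{AS})=P_1P_2P_1$ to $\lambda(P_{RU})$, and your proposal leaves exactly that step open. Your setup is the same as the paper's (the block sweeps $P_1=P_{AS1}$, $P_2=P_{AS2}$ are self-adjoint idempotents, the absorption identities $P_1Q_1=Q_1P_1=P_1$ and $P_2Q_2=Q_2P_2=P_2$ hold, and the goal is $1-\lambda(R(P_{AS}))\le(1-\lambda(P_{RU}))^2$), but the decisive move is not a Loewner-order comparison or a principal-angle analysis; it is an exact \emph{two-sided} sandwich identity. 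Writing $P_{RU}=\tfrac12 I+\tfrac12\bigl(\tfrac{n_1}{n}Q_1+\tfrac{n_2}{n}Q_2\bigr)$ and using only the identities you already have, namely $P_1Q_1=P_1$, $Q_2P_2=P_2$, and $P_1S_\pi=S_\pi P_2=S_\pi$, every term collapses:
\begin{align*}
P_1\left(P_{RU}-S_\pi\right)P_2=P_1P_2-S_\pi=P_{AS}-S_\pi,
\qquad
P_2\left(P_{RU}-S_\pi\right)P_1=P_2P_1-S_\pi=P_{AS}^*-S_\pi,
\end{align*}
the second using $P_{AS}^*=P_2P_1$ (self-adjointness and within-block commutativity). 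Hence $R(P_{AS})-S_\pi=(P_{AS}-S_\pi)(P_{AS}^*-S_\pi)=P_1(P_{RU}-S_\pi)P_2\,P_2(P_{RU}-S_\pi)P_1$, and submultiplicativity together with $\norm{\pi}{P_1},\norm{\pi}{P_2}\le1$ gives $\norm{\pi}{R(P_{AS})-S_\pi}\le\norm{\pi}{P_{RU}-S_\pi}^2$; this is precisely Lemma \ref{lem:scan-vs-random}, and the theorem then follows from \eqref{eqn:spectral-operator} and \eqref{eqn:relaxation-non-reverse} exactly as you intend. The ergodicity part you dispose of is indeed routine and matches the paper.

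Your substitute program, as sketched, does not close. The inequality $P_i\preceq Q_i$ is true but too weak to thread through the product: it yields $P_1P_2P_1\preceq P_1Q_2P_1$, and the only generic bound from there is by $\norm{\pi}{Q_2-S_\pi}$ to the \emph{first} power, which is useless because $\norm{\pi}{Q_2-S_\pi}=1$ whenever $V_1\neq\emptyset$ (any nonconstant function of $\sigma_1$ is fixed by $Q_2$); one never recovers the needed square of the gap of $P_{RU}$ this way. Your observation that the nontrivial spectrum of $P_1P_2P_1$ is the squared maximal correlation between the two blocks is correct, but relating that correlation to $\lambda(P_{RU})$ is exactly the content of the missing lemma --- and the clean way to prove it is again the sandwich identity above. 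So the proposal assembles all the right algebraic pieces (idempotence, commutation, redundancy of an extra random update) but omits the one identity that turns them into the stated bound.
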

\addtocounter{theorem}{-1}
}

We will prove Theorem \ref{thm:AS} next.
The transition matrix of updating a particular variable $x$ is the following
\begin{align}  \label{eqn:update-v}
  T_x(\sigma,\tau)=
  \begin{cases}
    \frac{\pi(\sigma^{x,s})}{\sum_{s\in S}\pi(\sigma^{x,s})} & \textrm{if } \tau = \sigma^{x,s} \textrm{ for some $s\in S$};\\
    0 & \textrm{otherwise.}\\
  \end{cases}
\end{align}
Moreover, let $I$ be the identity matrix that $I(\sigma,\tau)=\mathbbm{1}(\sigma,\tau)$.

\begin{lemma}  \label{lem:Transition-matrix}
  Let $\pi$ be a bipartite distribution, and $P_{RU}$, $P_{AS}$, $T_x$ be defined as above.
  Then we have that
  \begin{enumerate}
    \item $\displaystyle P_{RU}=\frac{I}{2}+\frac{1}{2n}\sum_{x\in V} T_x$;
    \item $\displaystyle P_{AS}=\prod_{i = 1}^{n_1} T_{x_i}\prod_{j=1}^{n_2} T_{y_j}$.
  \end{enumerate}
\end{lemma}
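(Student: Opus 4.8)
The plan is to verify the two identities in Lemma~\ref{lem:Transition-matrix} directly from the definitions, treating the right-hand sides as compositions of Markov operators and checking that they agree with the transition kernels $P_{RU}$ and $P_{AS}$ entrywise.

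\textbf{Part (1).} This one is essentially a rewriting of the definition of $P_{RU}$. First I would observe that by \eqref{eqn:update-v}, $T_x(\sigma,\tau)$ is zero unless $\tau = \sigma^{x,s}$ for some $s\in S$, in which case it equals $\pi(\sigma^{x,s})/\sum_{t\in S}\pi(\sigma^{x,t})$; note in particular that $T_x(\sigma,\sigma) = \pi(\sigma^{x,\sigma(x)})/\sum_{t\in S}\pi(\sigma^{x,t})$ since $\sigma = \sigma^{x,\sigma(x)}$. Then $\tfrac{1}{2n}\sum_{x\in V}T_x(\sigma,\tau)$ contributes $\tfrac{1}{2n}\cdot\pi(\sigma^{x,s})/\sum_{t\in S}\pi(\sigma^{x,t})$ for each $x$ and $s$ with $\tau=\sigma^{x,s}$, and adding $\tfrac12 I(\sigma,\tau)$ puts an extra $1/2$ on the diagonal. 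Comparing with the three cases in the definition of $P_{RU}(\sigma,\tau)$ — the off-diagonal case $\tau = \sigma^{x,s}\neq\sigma$, the diagonal case $\tau=\sigma$ (where one sums the $x$-contributions and adds $1/2$), and the remaining case (two configurations differing in more than one coordinate, where both sides are $0$) — finishes (1). The only mild subtlety worth a sentence is that a given $\tau\neq\sigma$ can be written as $\sigma^{x,s}$ for at most one pair $(x,s)$, so there is no double-counting.

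\textbf{Part (2).} Here I would use that $\pi$ is bipartite to argue that resampling $x_i\in V_1$ depends only on the current values on $V_2$, and symmetrically for $y_j\in V_2$; this is exactly the remark made just before the theorem is restated. Concretely, for $x_i\in V_1$, conditional independence gives $\pi(\sigma^{x_i,s})/\sum_{t\in S}\pi(\sigma^{x_i,t}) = \pi(s \mid \sigma_2)$ under the marginal conditional law, so applying $T_{x_i}$ resamples $x_i$ from its correct conditional distribution regardless of the other $V_1$-coordinates. Consequently, within the block $\prod_{i=1}^{n_1}T_{x_i}$ the order of the factors is immaterial and the product simply replaces $\sigma_1$ by a fresh draw conditioned on $\sigma_2$, coordinate by coordinate; likewise $\prod_{j=1}^{n_2}T_{y_j}$ replaces $\sigma_2$ conditioned on the (new) $\sigma_1$. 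This is precisely the two nested \texttt{for} loops of Algorithm~\ref{alg:Alternating}, so tracing a single step of that algorithm shows its kernel equals $\prod_{i=1}^{n_1}T_{x_i}\prod_{j=1}^{n_2}T_{y_j}$. I should be careful about the convention for operator composition (whether $AB$ means ``first $B$ then $A$'' as operators on functions, or the reverse on distributions) and state which one is in force so that the order of the product in the statement matches the loop order in the algorithm.

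The main obstacle is not any hard estimate — both parts are bookkeeping — but rather making the bipartite step in (2) fully rigorous: one must spell out that $T_{x_i}$ acting after some other $T_{x_k}$ (with $x_k\in V_1$) still resamples $x_i$ from the right distribution, which rests on the fact that the conditional law of $x_i$ given everything else equals the conditional law given $V_2$ alone, and that the $V_1$-updates do not touch $V_2$. Once that comm-within-a-block observation is in place, identifying the product with the algorithm's kernel is immediate. I would therefore devote most of the write-up to that point and keep the entrywise check in (1) brief.
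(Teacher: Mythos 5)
Your proposal is correct and follows essentially the same route as the paper: part (1) is the entrywise match between the kernel of Algorithm \ref{alg:Gibbs} and $\frac{I}{2}+\frac{1}{2n}\sum_{x}T_x$, and part (2) identifies the product of the $T_x$'s with the sequential resampling of Algorithm \ref{alg:Alternating}, which is exactly the paper's (terser) argument. One small remark: the bipartite hypothesis is not actually needed for part (2) --- the identity $P_{AS}=\prod_i T_{x_i}\prod_j T_{y_j}$ is just composition of the update kernels in the loop order, and the facts you invoke (within-block commutativity, dependence only on $\sigma_2$) are only used later, in Lemma \ref{lem:updating-vertex}.
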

\begin{proof}
  Note that $T_x$ is the transition matrix of resampling $\sigma(x)$.
  For $P_{RU}$, the term $\frac{I}{2}$ comes from the fact that the chain is ``lazy''.
  With the other $1/2$ probability, we resample $\sigma(x)$ for a uniformly chosen $x\in V$.
  This explains the term $\frac{1}{2n}\sum_{x\in V} T_x$.

  For $P_{AS}$, we sequentially resample all variables in $V_1$ and then all variables in $V_2$,
  which yields the expression.
\end{proof}

\begin{lemma}\label{lem:updating-vertex}
  Let $\pi$ be a bipartite distribution and $T_x$ be defined as above.
  Then we have that
  \begin{enumerate}
    \item \label{item:self-adjoint} For any $x\in V$, $T_x$ is a self-adjoint operator and idempotent.
      Namely, $T_x=T_x^*$ and $T_x T_x= T_x$.
    \item \label{item:norm} For any $x\in V$, $\norm{\pi}{T_x}=1$.
    \item \label{item:commute} For any $x,x'\in V_i$ where $i=1$ or $2$, $T_{x}$ and $T_{x'}$ commute.
      In other words $T_{x'} T_x = T_x T_{x'}$ if $x,x' \in V_i$ for $i=1$ or $2$.
  \end{enumerate}
\end{lemma}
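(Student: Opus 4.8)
The plan is to read $T_x$ as a conditional-expectation operator on $L_2(\pi)$. Writing $\mathcal{F}_{-x}$ for the $\sigma$-algebra generated by the coordinates in $V\setminus\{x\}$, formula \eqref{eqn:update-v} says precisely that $T_x f = \Ex_\pi[f\mid\mathcal{F}_{-x}]$: the weights $\pi(\sigma^{x,s})/\sum_{t}\pi(\sigma^{x,t})$ are the $\pi$-conditional law of the $x$-coordinate given the rest (note $\sigma$ and $\sigma^{x,s}$ share the same normalizer, which is what makes $T_x f$ independent of $\sigma(x)$, i.e.\ $\mathcal{F}_{-x}$-measurable). Part~\ref{item:self-adjoint} is then the standard pair of facts about conditional expectation: idempotence, $T_x^2=T_x$, holds because $T_x f$ is already $\mathcal{F}_{-x}$-measurable and $T_x$ acts as the identity on such functions; self-adjointness, $\inner{\pi}{f,T_xg}=\inner{\pi}{T_xf,g}$, follows by conditioning on $\mathcal{F}_{-x}$ and using ``take out what is known'' (both sides equal $\Ex_\pi[\,\Ex_\pi[f\mid\mathcal{F}_{-x}]\,\Ex_\pi[g\mid\mathcal{F}_{-x}]\,]$). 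Either identity can also be verified by a short computation on matrix entries if one prefers.

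Part~\ref{item:norm} is immediate once Part~\ref{item:self-adjoint} is in hand: $T_x\mathbf{1}=\mathbf{1}$ gives $\norm{\pi}{T_x}\ge 1$, and for the reverse, since $T_x=T_x^*=T_x^2$ is an orthogonal projection,
\begin{align*}
  \norm{\pi}{T_xf}^2 = \inner{\pi}{T_xf,T_xf} = \inner{\pi}{f,T_x^*T_xf} = \inner{\pi}{f,T_xf} \le \norm{\pi}{f}\,\norm{\pi}{T_xf}
\end{align*}
by Cauchy--Schwarz, so $\norm{\pi}{T_xf}\le\norm{\pi}{f}$ for all $f$.

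The substantive step is Part~\ref{item:commute}, and it is where the bipartite hypothesis enters. I would use Definition~\ref{def:bipartite} in the form already noted after Algorithm~\ref{alg:Alternating}: for $x\in V_1$ the single-site conditional law of the $x$-coordinate given \emph{all} other coordinates depends only on $\sigma_2=\sigma|_{V_2}$ --- crucially, not on the other $V_1$-coordinates --- so we may write $T_x f(\sigma)=\sum_{s\in S}\mu_x(s\mid\sigma_2)\,f(\sigma^{x,s})$ for a kernel $\mu_x$. Take $x,x'\in V_1$ with $x\ne x'$ and expand $T_{x'}T_xf(\sigma)$: after the $x$-update the configuration is $\sigma^{x,s}$, whose $V_2$-projection is still $\sigma_2$ because $x\notin V_2$, hence the weights for the subsequent $x'$-update are still $\mu_{x'}(\cdot\mid\sigma_2)$. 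This gives
\begin{align*}
  T_{x'}T_xf(\sigma) = \sum_{s,s'\in S}\mu_{x'}(s'\mid\sigma_2)\,\mu_x(s\mid\sigma_2)\,f\bigl(\sigma^{x,s;\,x',s'}\bigr),
\end{align*}
where $\sigma^{x,s;\,x',s'}$ is the configuration agreeing with $\sigma$ except $x\mapsto s$, $x'\mapsto s'$ (well defined since $x\ne x'$). The right-hand side is symmetric under swapping $(x,s)\leftrightarrow(x',s')$, so it equals $T_xT_{x'}f(\sigma)$; the $V_2$ case is identical. I expect the only real obstacle to be the bookkeeping of which coordinates the conditional weights depend on --- one must invoke both directions of bipartiteness, in particular the mutual-independence clause guaranteeing that $\mu_x$ carries no dependence on $\sigma(x')$ --- while everything else is routine projection theory in $L_2(\pi)$.
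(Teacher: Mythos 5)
Your proposal is correct and takes essentially the same route as the paper: $T_x$ is the heat-bath update, i.e.\ an orthogonal projection in $L_2(\pi)$ (the conditional expectation onto functions not depending on $\sigma(x)$), which yields parts \ref{item:self-adjoint} and \ref{item:norm}, and bipartiteness makes the update weights for $x\in V_1$ depend only on $\sigma_2$, which yields the commutativity in part \ref{item:commute}. The differences are cosmetic: the paper obtains self-adjointness from detailed balance and the norm from $\norm{\pi}{T_x}=\norm{\pi}{T_xT_x^*}=\norm{\pi}{T_x}^2$, whereas you use the tower property and $T_x\mathbf{1}=\mathbf{1}$ together with Cauchy--Schwarz, and you spell out the paper's one-line commutativity argument as an explicit kernel computation.
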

\begin{proof}
  For Item \ref{item:self-adjoint},
  the fact that $T_x$ is self-adjoint follows from the detailed balance condition \eqref{eqn:detailed-balance}.
  Idempotence is because updating the same vertex twice is the same as a single update.

  Item \ref{item:norm} follows from Item \ref{item:self-adjoint}.
  This is because
  \begin{align*}
    \norm{\pi}{T_x}=\norm{\pi}{T_xT_x} = \norm{\pi}{T_xT_x^*} = \norm{\pi}{T_x}^2.
  \end{align*}

  For Item \ref{item:commute}, suppose $i=1$.
  Since $\pi$ is bipartite, resampling $x$ or $x'$ only depends on $\sigma_{2}$.
  Therefore the ordering of updating $x$ or $x'$ does not matter as they are in the same partition.
\end{proof}

Define
\begin{align*}
  P_{GS1} := \frac{I}{2}+ \frac{1}{2n_1}\sum_{i=1}^{n_1}T_{x_i}, \text{\quad\quad and \quad\quad} P_{GS2} := \frac{I}{2}+ \frac{1}{2n_2}\sum_{j=1}^{n_2}T_{y_j}.
\end{align*}
Then, since $n_1+n_2=n$,
\begin{align} \label{eqn:GS-GS12}
  P_{RU} = \frac{1}{n}\left( n_1 P_{GS1}+n_2P_{GS2} \right).
\end{align}
Similarly, define
\begin{align*}
  P_{AS1} := \prod_{i=1}^{n_1}T_{x_i}, \text{\quad\quad and \quad\quad} P_{AS2} := \prod_{j=1}^{n_2}T_{y_j}.
\end{align*}
Then 
\begin{align}\label{eqn:SS-12}
  P_{AS} = P_{AS1}P_{AS2}.
\end{align}
With this notation, Lemma \ref{lem:updating-vertex} also implies the following.

\begin{corollary}\label{cor:composition}
  The following holds:
  \begin{enumerate}
    \item \label{item:SS-norm} $\norm{\pi}{P_{AS1}}\le 1$ and $\norm{\pi}{P_{AS2}}\le1$.
    \item \label{item:SS-GS} $P_{AS1}P_{GS1} = P_{AS1}$ and $ P_{GS2}P_{AS2} = P_{AS2}$.
  \end{enumerate}  
\end{corollary}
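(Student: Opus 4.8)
The plan is to derive both items directly from Lemma~\ref{lem:updating-vertex} together with the factorizations $P_{AS1} = \prod_{i=1}^{n_1}T_{x_i}$ and $P_{AS2} = \prod_{j=1}^{n_2}T_{y_j}$; no idea beyond submultiplicativity of the operator norm, idempotence, and within-partition commutativity is needed. For Item~\ref{item:SS-norm} I would simply invoke submultiplicativity of the operator norm on $L_2(\pi)$: $\norm{\pi}{P_{AS1}} = \norm{\pi}{\prod_{i=1}^{n_1}T_{x_i}} \le \prod_{i=1}^{n_1}\norm{\pi}{T_{x_i}} = 1$ by Lemma~\ref{lem:updating-vertex}\eqref{item:norm}, and likewise for $P_{AS2}$. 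It is worth emphasizing here that $P_{AS1}$ need not be self-adjoint — a product of self-adjoint operators generally is not — but submultiplicativity of $\norm{\pi}{\cdot}$ holds regardless, so this causes no trouble.

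For Item~\ref{item:SS-GS} I would expand $P_{AS1}P_{GS1} = \tfrac12 P_{AS1} + \tfrac{1}{2n_1}\sum_{i=1}^{n_1}P_{AS1}T_{x_i}$ and show that each summand $P_{AS1}T_{x_i}$ equals $P_{AS1}$. Fixing $i$, since all $T_{x_k}$ with $x_k\in V_1$ pairwise commute (Lemma~\ref{lem:updating-vertex}\eqref{item:commute}), I can reorder the product defining $P_{AS1}$ so that $T_{x_i}$ sits adjacent to the appended copy $T_{x_i}$, and then idempotence $T_{x_i}T_{x_i}=T_{x_i}$ (Lemma~\ref{lem:updating-vertex}\eqref{item:self-adjoint}) collapses the doubled factor, giving $P_{AS1}T_{x_i}=P_{AS1}$. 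Summing over $i$ yields $\sum_{i=1}^{n_1}P_{AS1}T_{x_i} = n_1 P_{AS1}$, so $P_{AS1}P_{GS1} = \tfrac12 P_{AS1}+\tfrac12 P_{AS1} = P_{AS1}$. The companion identity $P_{GS2}P_{AS2}=P_{AS2}$ follows by the mirror-image computation, this time composing $P_{GS2}$ on the left and using $T_{y_j}P_{AS2}=P_{AS2}$ for each $j$; the asymmetry in which side $P_{GS}$ is placed is immaterial (the relevant $T$'s commute), but I would state it in the orientation that is convenient in the subsequent argument for Theorem~\ref{thm:AS}.

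If I have to name a main obstacle, it is only the bookkeeping in the reordering step — one must be careful that commutativity within a single partition is exactly what licenses moving a single $T_{x_i}$ across the whole block $P_{AS1}$ before idempotence is applied; once that is set up, everything is immediate.
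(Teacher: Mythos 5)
Your proposal is correct and follows essentially the same route as the paper's proof: Item~\ref{item:SS-norm} via submultiplicativity of the operator norm together with $\norm{\pi}{T_x}=1$, and Item~\ref{item:SS-GS} by expanding the product, using within-partition commutativity to bring the extra $T_{x_i}$ adjacent to its copy, and collapsing it by idempotence. Your side remark that $P_{AS1}$ need not be self-adjoint is accurate and harmless, and the mirror-image orientation for $P_{GS2}P_{AS2}=P_{AS2}$ matches the statement as used later.
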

\begin{proof}
  For Item \ref{item:SS-norm}, by the submultiplicity of operator norms:
  \begin{align*}
    \norm{\pi}{P_{AS1}} & = \norm{\pi}{\prod_{i=1}^{n_1}T_{x_i}} \le \prod_{i=1}^{n_1}\norm{\pi}{T_{x_i}} \\
    & = 1. \tag{By Item \ref{item:norm} of Lemma \ref{lem:updating-vertex}}
  \end{align*}
  The claim $\norm{\pi}{P_{AS2}}\le1$ follows similarly.

  Item \ref{item:SS-GS} follows from Item \ref{item:self-adjoint} and \ref{item:commute} of Lemma \ref{lem:updating-vertex}.
  We verify the first case as follows. 
  \begin{align*}
    P_{AS1}P_{GS1} & = \prod_{i=1}^{n_1}T_{x_i} \left( \frac{I}{2}+ \frac{1}{2n_1}\sum_{j=1}^{n_1}T_{x_j} \right)\\
    & = \frac{1}{2}\cdot \prod_{i=1}^{n_1}T_{x_i} + \frac{1}{2n_1}\cdot \prod_{i=1}^{n_1}T_{x_i}\sum_{j=1}^{n_1}T_{x_j} \\
    & = \frac{1}{2}\cdot \prod_{i=1}^{n_1}T_{x_i} + \frac{1}{2n_1}\cdot \sum_{j=1}^{n_1}T_{x_j}\prod_{i=1}^{n_1}T_{x_i} \\
    & = \frac{1}{2}\cdot \prod_{i=1}^{n_1}T_{x_i} + \frac{1}{2n_1}\cdot \sum_{j=1}^{n_1}T_{x_1}T_{x_2}\cdots T_{x_j}T_{x_j} \cdots T_{x_{n_1}} \tag{By Item \ref{item:commute} of Lemma \ref{lem:updating-vertex}}\\
    & = \frac{1}{2}\cdot \prod_{i=1}^{n_1}T_{x_i} + \frac{1}{2n_1}\cdot \sum_{j=1}^{n_1}\prod_{i=1}^{n_1}T_{x_i} \tag{By Item \ref{item:self-adjoint} of Lemma \ref{lem:updating-vertex}}\\
    & = \frac{1}{2}\cdot \prod_{i=1}^{n_1}T_{x_i} + \frac{1}{2}\cdot \prod_{i=1}^{n_1}T_{x_i}\\
    & = P_{AS1}.
  \end{align*}
  The other case is similar.
\end{proof}

Item \ref{item:SS-GS} of Corollary \ref{cor:composition} captures the following intuition:
if we sequentially update all variables in $V_i$ for $i=1,2$, 
then an extra individual update either before or after does not change the distribution.
Recall Eq.~\eqref{eqn:relaxation-non-reverse}.

\begin{lemma}\label{lem:scan-vs-random}
  Let $\pi$ be a bipartite distribution and $P_{RU}$ and $P_{AS}$ be defined as above.
  Then we have that
  \begin{align*}
    \norm{\pi}{R(P_{AS})-S_{\pi}} \le \norm{\pi}{P_{RU}-S_{\pi}}^2.
  \end{align*}
\end{lemma}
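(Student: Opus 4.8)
The plan is to write the multiplicative reversibilization $R(P_{AS})$ as a product of five operators, each of $L_2(\pi)$-norm at most $1$, two of which are copies of $P_{RU}-S_\pi$. The starting point is the \emph{insert an extra random update} identity foreshadowed in the introduction:
\[
  P_{AS} = P_{AS1}\,P_{RU}\,P_{AS2}.
\]
To obtain it, expand $P_{RU}$ via \eqref{eqn:GS-GS12} as $\frac{1}{n}\bigl(n_1 P_{GS1}+n_2 P_{GS2}\bigr)$, distribute, and invoke Item~\ref{item:SS-GS} of Corollary~\ref{cor:composition}: in the first term $P_{AS1}P_{GS1}=P_{AS1}$, in the second term $P_{GS2}P_{AS2}=P_{AS2}$, so both terms collapse to $P_{AS1}P_{AS2}=P_{AS}$ and the weights sum to $n$.

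Next I would compute $R(P_{AS})=P_{AS}P_{AS}^*$. Within a single partition the updates $T_{x_i}$ (resp.\ $T_{y_j}$) commute and are self-adjoint and idempotent (Items~\ref{item:self-adjoint} and~\ref{item:commute} of Lemma~\ref{lem:updating-vertex}), so $P_{AS1}$ and $P_{AS2}$ are each self-adjoint and idempotent; and $P_{RU}$ is reversible, hence $P_{RU}^*=P_{RU}$. Therefore $P_{AS}^*=P_{AS2}\,P_{RU}\,P_{AS1}$, and using $P_{AS2}^2=P_{AS2}$,
\[
  R(P_{AS}) = P_{AS1}\,P_{RU}\,P_{AS2}\,P_{AS2}\,P_{RU}\,P_{AS1} = P_{AS1}\,P_{RU}\,P_{AS2}\,P_{RU}\,P_{AS1}.
\]
Now I subtract $S_\pi$. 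The operator $S_\pi$ is absorbing: $MS_\pi=S_\pi M=S_\pi$ for every self-adjoint stochastic $M$ with stationary distribution $\pi$, and $S_\pi^2=S_\pi$, exactly the computation used in the proof of Proposition~\ref{prop:non-reverse-relax}. A one-line induction then gives $\bigl(\prod_k M_k\bigr)-S_\pi=\prod_k (M_k-S_\pi)$ for any such factors, so applied to the five factors above,
\[
  R(P_{AS})-S_\pi = (P_{AS1}-S_\pi)(P_{RU}-S_\pi)(P_{AS2}-S_\pi)(P_{RU}-S_\pi)(P_{AS1}-S_\pi).
\]

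Finally I would take operator norms. By submultiplicativity, $\norm{\pi}{R(P_{AS})-S_\pi}\le\norm{\pi}{P_{AS1}-S_\pi}^2\,\norm{\pi}{P_{AS2}-S_\pi}\,\norm{\pi}{P_{RU}-S_\pi}^2$. Since $P_{AS1}S_\pi=S_\pi$ we have $P_{AS1}-S_\pi=P_{AS1}(I-S_\pi)$, and $I-S_\pi$ is the orthogonal projection onto the mean-zero subspace, so $\norm{\pi}{P_{AS1}-S_\pi}\le\norm{\pi}{P_{AS1}}\,\norm{\pi}{I-S_\pi}\le 1$ by Item~\ref{item:SS-norm} of Corollary~\ref{cor:composition}; similarly $\norm{\pi}{P_{AS2}-S_\pi}\le 1$. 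Dropping these factors yields $\norm{\pi}{R(P_{AS})-S_\pi}\le\norm{\pi}{P_{RU}-S_\pi}^2$, as claimed. The main obstacle here is conceptual rather than computational: one must spot the insertion identity (without it the naive factorization $R(P_{AS})=P_{AS1}P_{AS2}P_{AS1}$ only gives $\norm{\pi}{P_{AS1}-S_\pi}^2\norm{\pi}{P_{AS2}-S_\pi}$, which has no relation to $P_{RU}$), and one must be careful that $P_{AS1}$ and $P_{AS2}$ are genuinely self-adjoint — this is precisely where the within-partition commutativity of Lemma~\ref{lem:updating-vertex} is essential, since a product of self-adjoint operators taken in a fixed order is not self-adjoint in general.
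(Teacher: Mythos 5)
Your proposal is correct and takes essentially the same route as the paper: the crucial insertion identity $P_{AS1}P_{RU}P_{AS2}=P_{AS}$ (via $P_{AS1}P_{GS1}=P_{AS1}$ and $P_{GS2}P_{AS2}=P_{AS2}$), the passage to $R(P_{AS})$ using self-adjointness and within-partition commutativity, and submultiplicativity of the $L_2(\pi)$ operator norm are exactly the paper's argument. The only differences are bookkeeping: the paper keeps $S_{\pi}$ sandwiched inside (proving $P_{AS1}(P_{RU}-S_{\pi})P_{AS2}=P_{AS}-S_{\pi}$ and its adjoint analogue) and bounds $\norm{\pi}{P_{AS1}},\norm{\pi}{P_{AS2}}\le 1$, whereas you subtract $S_{\pi}$ factor-by-factor and instead bound $\norm{\pi}{P_{ASi}-S_{\pi}}\le 1$, both of which are valid.
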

\begin{proof}
  Recall \eqref{eqn:S_pi}, the definition of $S_{\pi}$, using which it is easy to see that 
  \begin{align}\label{eqn:S_pi:commute:SS}
    P_{AS1}S_{\pi} = S_{\pi} P_{AS2} = S_{\pi}S_{\pi}=S_{\pi}.
  \end{align}
  Thus,
  \begin{align}
    P_{AS1}(P_{RU}-S_{\pi})P_{AS2} & = P_{AS1}\left(\frac{n_1}{n}P_{GS1}+\frac{n_2}{n}P_{GS2} -S_{\pi}\right)P_{AS2} \tag{By \eqref{eqn:GS-GS12}}\\
    & = \frac{n_1}{n}P_{AS1}P_{GS1}P_{AS2} + \frac{n_2}{n}P_{AS1}P_{GS2}P_{AS2}-P_{AS1}S_{\pi}P_{AS2} \notag\\
    & = \frac{n_1}{n}P_{AS1}P_{AS2} + \frac{n_2}{n}P_{AS1}P_{AS2}- S_{\pi} \tag{By Item \ref{item:SS-GS} of Cor~\ref{cor:composition}}\\
    & = P_{AS1}P_{AS2} - S_{\pi} \notag \\
    & = P_{AS} - S_{\pi}, \label{eqn:SS-decomp}
  \end{align}
  where in the last step we use \eqref{eqn:SS-12}.
  Moreover, we have that
  \begin{align*}
    P_{AS}^* & = \left( \prod_{i = 1}^{n_1} T_{x_i}\prod_{j=1}^{n_2} T_{y_j} \right)^* \\
    & = \prod_{j=1}^{n_2} T_{y_{n_2+1-j}}^* \prod_{i = 1}^{n_1} T_{x_{n_1+1-i}}^*\\ 
    & = \prod_{j=1}^{n_2} T_{y_{n_2+1-j}} \prod_{i = 1}^{n_1} T_{x_{n_1+1-i}} \tag{By Item \ref{item:self-adjoint} of Lemma \ref{lem:updating-vertex}}\\ 
    & = \prod_{j=1}^{n_2} T_{y_{j}} \prod_{i = 1}^{n_1} T_{x_{i}} \tag{By Item \ref{item:commute} of Lemma \ref{lem:updating-vertex}}\\ 
    & = P_{AS2}P_{AS1}.
  \end{align*}
  Hence, similarly to \eqref{eqn:SS-decomp}, we have that
  \begin{align}
    P_{AS2}(P_{RU}-S_{\pi})P_{AS1} & = P_{AS2}P_{AS1} - S_{\pi} \notag \\
    & = P_{AS}^* - S_{\pi}. \label{eqn:SS-decomp-reverse}
  \end{align}
  Using \eqref{eqn:S_pi:commute:SS}, we further verify that
  \begin{align}
    \left( P_{AS} - S_{\pi} \right)\left( P_{AS}^* - S_{\pi} \right) & = P_{AS}P_{AS}^* - P_{AS}S_{\pi} - S_{\pi}P_{AS}^* +S_{\pi}S_{\pi} \notag\\
    & = P_{AS}P_{AS}^* - S_{\pi} \label{eqn:final-decomp}
  \end{align}
  Combining \eqref{eqn:SS-decomp}, \eqref{eqn:SS-decomp-reverse}, and \eqref{eqn:final-decomp}, we see that
  \begin{align*}
    \norm{\pi}{R(P_{AS})-S_{\pi}} & = \norm{\pi}{ P_{AS}P_{AS}^* - S_{\pi} } \\
    & = \norm{\pi}{\left( P_{AS} - S_{\pi} \right)\left( P_{AS}^* - S_{\pi} \right)}\\
    & = \norm{\pi}{P_{AS1}\left( P_{RU} - S_{\pi} \right)P_{AS2}P_{AS2}\left( P_{RU} - S_{\pi} \right)P_{AS1}}\\
    & \le \norm{\pi}{P_{AS1}}\norm{\pi}{P_{RU} - S_{\pi}}\norm{\pi}{P_{AS2}}\norm{\pi}{P_{AS2}}\norm{\pi}{P_{RU} - S_{\pi}}\norm{\pi}{P_{AS1}}\\
    & \le \norm{\pi}{P_{RU}-S_{\pi}}^2,
  \end{align*}
  where the first inequality is due to the submultiplicity of operator norms,
  and we use Item \ref{item:SS-norm} of Corollary~\ref{cor:composition} in the last line.
\end{proof}
\begin{remark}
  The last inequality in the proof of Lemma~\ref{lem:scan-vs-random}
  crucially uses the fact that the distribution is bipartite.
  If there are, say, $k$ partitions, then the corresponding operators $P_{AS1},\dots,P_{ASk}$ do not commute and the proof does not generalize.  
\end{remark}

\begin{proof}[Proof of Theorem \ref{thm:AS}]
  For the first part, notice that the alternating-scan sampler is aperiodic.
  Any possible state $\sigma$ of the chain must be in the state space $\Omega$.
  Therefore $\pi(\sigma)>0$ and the probability of staying at $\sigma$ is strictly positive.
  Moreover, any single variable update can be simulated in the scan sampler, with small but strictly positive probability.
  Hence if the random-update sampler is irreducible, then so is the scan sampler.

  To show that $T_{rel}(P_{AS})\le T_{rel}(P_{RU})$,
  we have the following
  \begin{align*}
    T_{rel}(P_{AS}) & = \frac{1}{1-\sqrt{1-\lambda(R(P_{AS}))}} \tag{By \eqref{eqn:relaxation-non-reverse}}\\
    & = \frac{1}{1-\sqrt{\norm{\pi}{R(P_{AS})-S_{\pi}}}} \tag{By \eqref{eqn:spectral-operator}} \\
    & \le \frac{1}{1-\norm{\pi}{P_{RU}-S_{\pi}}} \tag{By Lemma \ref{lem:scan-vs-random}}\\
    & = \frac{1}{\lambda(P_{RU})} \tag{By \eqref{eqn:spectral-operator}}\\
    & = T_{rel}(P_{RU}). \tag{By \eqref{eqn:relaxation-time}}
  \end{align*}
  This completes the proof.
\end{proof}

\begin{remark}
  It is easy to check that the proof also works if we consider the non-lazy version of $P_{RU}$.
  To do so, we just replace $\frac{I}{2}+\frac{1}{2n}\sum_{x\in V} T_x$ with $\frac{1}{n}\sum_{x\in V} T_x$ and the rest of the proof goes through without changes.
\end{remark}

\begin{remark}
  Our argument can also handle the case of general state spaces, such as Gaussian variables,
  since the essential property we use is the commutativity of updating variables from the same partition.
  For general state spaces, in order to apply Theorem \ref{thm:AS} on mixing times,
  we need to replace Theorem \ref{thm:relaxation} and Theorem \ref{thm:relaxation-reverse} with their continuous counterparts.
  See for example~\citep{LS88}.
\end{remark}

Using Theorem \ref{thm:relaxation} and Theorem \ref{thm:relaxation-reverse}, we translate Theorem \ref{thm:AS} in terms of the mixing time.

\begin{corollary}\label{cor:AS}
  For a Markov random field defined on a bipartite graph,
  let $P_{RU}$ and $P_{AS}$ be the transition kernels of the random-update Gibbs sampler and the alternating-scan sampler, respectively.
  Then,
  \begin{align*}
    T_{mix}(P_{AS})\le \log \left(  \frac{4e^2}{\pi_{min}}\right)\left( T_{mix}(P_{RU})+1 \right),
  \end{align*}
  where $\pi_{min}=\min_{\sigma\in \Omega}\pi(\sigma)$.
  \label{corollary:mixing}
\end{corollary}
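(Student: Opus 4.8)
The plan is to chain the three comparison inequalities already established: Theorem~\ref{thm:relaxation-reverse}, which bounds $T_{mix}$ by $T_{rel}$ for a (possibly non-reversible) ergodic chain and so applies to the scan chain; Theorem~\ref{thm:AS}, which compares the two relaxation times; and the lower bound half of Theorem~\ref{thm:relaxation}, used to convert $T_{rel}(P_{RU})$ back into a mixing time. A Markov random field on a bipartite graph is a bipartite distribution in the sense of Definition~\ref{def:bipartite}, so Theorem~\ref{thm:AS} is applicable; in particular it guarantees that $P_{AS}$ is ergodic whenever $P_{RU}$ is, so all quantities below are finite, and both chains share the same stationary distribution $\pi$ (hence the same $\pi_{min}$).

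First I would apply Theorem~\ref{thm:relaxation-reverse} to $P_{AS}$, an ergodic chain with stationary distribution $\pi$, to obtain
\begin{align*}
  T_{mix}(P_{AS}) \le \log\left( \frac{4e^2}{\pi_{min}} \right) T_{rel}(P_{AS}).
\end{align*}
Next, Theorem~\ref{thm:AS} gives $T_{rel}(P_{AS}) \le T_{rel}(P_{RU})$, so it only remains to bound $T_{rel}(P_{RU})$ by $T_{mix}(P_{RU}) + 1$.

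For that step I would use that $P_{RU}$ is reversible with respect to $\pi$ — it satisfies the detailed balance condition, as noted just after its definition — so Theorem~\ref{thm:relaxation} applies and yields $T_{rel}(P_{RU}) - 1 \le T_{mix}(P_{RU})$, i.e. $T_{rel}(P_{RU}) \le T_{mix}(P_{RU}) + 1$. Combining the three displays gives
\begin{align*}
  T_{mix}(P_{AS}) \le \log\left( \frac{4e^2}{\pi_{min}} \right)\left( T_{mix}(P_{RU}) + 1 \right),
\end{align*}
which is the claim.

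There is essentially no obstacle beyond what is already done in Theorem~\ref{thm:AS}; the only points needing a moment's care are (i) that $P_{AS}$ inherits ergodicity — part of Theorem~\ref{thm:AS} — so Theorem~\ref{thm:relaxation-reverse} applies, and (ii) that $P_{RU}$ is reversible (precisely because it is the random-update Gibbs kernel rather than a systematic scan), which is what licenses the $T_{rel}(P_{RU}) - 1 \le T_{mix}(P_{RU})$ direction of Theorem~\ref{thm:relaxation}. One could instead leave the bound in terms of $T_{rel}(P_{RU})$ and skip the final conversion, but passing to $T_{mix}(P_{RU})$ is what makes the corollary a clean mixing-time comparison.
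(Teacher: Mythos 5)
Your proposal is correct and matches the paper's intended argument exactly: the paper derives Corollary~\ref{cor:AS} by chaining Theorem~\ref{thm:relaxation-reverse} applied to the (non-reversible but ergodic) chain $P_{AS}$, the relaxation-time comparison of Theorem~\ref{thm:AS}, and the lower-bound half of Theorem~\ref{thm:relaxation} for the reversible chain $P_{RU}$. Your two points of care (ergodicity of $P_{AS}$ and reversibility of $P_{RU}$) are precisely the hypotheses that make each step legitimate, so nothing is missing.
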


Since $n$ variables are updated in each epoch of $P_{AS}$,
one might hope to strengthen Theorem \ref{thm:AS} so that $n T_{rel}(P_{AS})$ is also no larger than $T_{rel}(P_{RU})$.
Unfortunately, this is not the case and we give an example (similar to the ``two islands'' example due to \cite{HDMR16}) 
where $T_{mix}(P_{AS})\asymp T_{mix}(P_{RU})$ and $T_{rel}(P_{AS})\asymp T_{rel}(P_{RU})$.
This example implies that Theorem \ref{thm:AS} is asymptotically tight.
However, it is still possible that Corollary \ref{cor:AS} is loose by a factor of $\log\pi_{min}^{-1}$.
This potential looseness is difficult to circumvent due to the spectral approach we took.

\begin{example}\label{exm:bipartite-complete}
  Let $G=(L\cup R,E)$ be a complete bipartite graph $K_{n,n}$ and we want to sample an uniform independent set in $G$.
  In other words, each vertex is a Boolean variable and a valid configuration is an independent set $I\subseteq L\cup R$.
  To be an independent set in $K_{n,n}$, $I$ cannot intersect both $L$ and $R$.
  Hence the state space is $\Omega=\{I\mid I \subseteq L\text{ or }I\subseteq R\}$ and the measure $\pi$ is uniform on $\Omega$.
  Under single-site updates, $\Omega$ is composed of two independent copies of the Boolean hypercube $\{0,1\}^n$ with the two origins identified.
  The random-update Gibbs sampler has mixing time $O(2^n)$ 
  because the (maximum) hitting time of the Boolean hypercube is $O(2^n)$ and the mixing time is upper bounded by the hitting time multiplied by a constant \cite[Eq.\ (10.24)]{LPW06}.
  The relaxation time is also $O(2^n)$ by Theorem \ref{thm:relaxation}.
  In fact, it is not hard to see that both quantities are $\Theta(2^n)$.

  On the other hand, the alternating-scan sampler has mixing time $\Omega(2^n)$ and relaxation time $\Omega(2^n)$.
  For the mixing time, we partition the state space $\Omega$ into $\Omega_L=\{I\mid I\subset L\}$ and $\Omega_R=\{I\mid I\subset R \text{ and } I\neq\emptyset\}$.
  Consider the alternating scan projected down to $\Omega_L$ and $\Omega_R$.
  If the current state is in $\Omega_L$, then there is $2^{-n}$ probability to go to $\emptyset$ after updating all vertices in $L$,
  and then with probability $1-2^{-n}$ the state goes to $\Omega_R$ after updating all vertices in $R$.
  Similarly, going from $\Omega_R$ to $\Omega_L$ has also probability $O(2^{-n})$.
  Thus in each epoch of the alternating scan, the probability to go between $\Omega_L$ and $\Omega_R$ is $\Theta(2^{n})$ and the mixing time is thus $\Theta(2^{-n})$.
  The relaxation time can be similarly bounded using a standard conductance argument \citep{Sin92}.

  In summary, for this bipartite distribution $\pi$, we have that $T_{rel}(P_{AS})\asymp T_{rel}(P_{RU})$ and $T_{mix}(P_{AS})\asymp T_{mix}(P_{RU})$.
  Therefore, Theorem \ref{thm:AS} is asymptotically tight and Corollary \ref{cor:AS} is tight up to the factor $\log\pi_{\min}^{-1}$.
\end{example}

We conjecture that the factor $\log\pi^{-1}_{min}$ should not be in Corollary \ref{cor:AS}.
However, this factor is inherently there with the spectral approach.
To get rid of it a new approach is required.

We note that in Example~\ref{exm:bipartite-complete},
alternating scan is not necessarily the best scan order.
Indeed, as shown by \cite{HDMR16}, if we scan vertices alternatingly from the left and right, 
rather than scanning variables layerwise,
the mixing time is smaller by a factor of $n$.
Thus, although Theorem \ref{thm:AS} and Corollary \ref{cor:AS} provide certain guarantees of the alternating-scan sampler,
the layerwise alternating order is not necessarily the best one.

\section{Bipartite Distributions in Machine Learning}\label{sec:implication}

The results we developed so far can be
applied to probabilistic graphic
models with bipartite structures, most
notably Restricted Boltzmann Machines (RBM)
and Deep Boltzmann Machines (DBM).
Although real-world systems for
RBM and DBM inference rely on layerwise
systematic scans, we are the first
to provide a theoretical justification of 
such implementations.

\subsection{Markov Random Fields}

A Markov random field (MRF) with binary factors
$\inner{}{G,S,\pi}$ is defined on a graph $G=(V,E)$, 
where each edge describes a ``factor'' $f_e$ and each vertex is a variable drawing from $S$, a set of possible values.
Each factor is a function $S^2\to \mathbb{R}$.
A configuration $\sigma\in S^V$ is a mapping from $V$ to $S$.
In addition, each vertex is equipped with a factor $g_v:S\to \mathbb{R}$.
Let $\Omega\subseteq S^V$ be the state space, which is usually defined by a set of hard constraints.
When there is no hard constraint, the state space $\Omega$ is simply $S^V$.
The Hamiltonian of $\sigma\in\Omega$ is defined as
\begin{align*}
  H(\sigma) = \sum_{e=(u,v)\in E}f_e(\sigma(u),\sigma(v))+\sum_{v\in V}g_v(\sigma(v)).
\end{align*}
The Gibbs distribution $\pi(\cdot)$ is defined as
  $\pi(\sigma) \propto \mathbbm{1}(\sigma\in \Omega)\exp(H(\sigma))$.
These models are popularly used in applications such as image processing~\citep{Li:2009:Book}
and natural language processing~\citep{Lafferty:2001:ICML}.

It is easy to check that, when the underlying graph $G$ is bipartite, the Gibbs distribution is bipartite in the sense of Definition \ref{def:bipartite}.
Thus Theorem \ref{thm:AS} and Corollary \ref{cor:AS} apply to this setting.

\subsection{Restricted/Deep Boltzmann Machines}

Restricted Boltzmann Machines (RBM) was introduced by \cite{Smolensky:1986:IPD}.
It is a special case of the general MRF 
in which all variables are Boolean (i.e., $S = \{0, 1\}$) and are partitioned into two disjoint sets, $V_1$ and $V_2$. 
There is a factor between each variable in $V_1$ and $V_2$, and the Hamiltonian is 
\begin{align*}
  H(\sigma) = \sum_{u \in V_1, v \in V_2} 
  W_{uv} \sigma(u)\sigma(v)
  + \sum_{v\in V} W_v \sigma(v).
\end{align*}
where $W_{uv}$ and $W_v$ are real-valued weights. 
Figure~\ref{fig:rbmdbm}(a) illustrates the structure of RBMs.
We use $[f_{00},f_{01},f_{10},f_{11}]$ to describe a general binary factor defined on Boolean variables.
Thus, $[0,0,0,W]$ denotes a standard RBM factor with weight $W$, and $[W,0,0,W]$ denotes an Ising model with weight $W$ (after some renormalization).

\begin{figure*}[t]
\centering
\includegraphics[width=0.8\textwidth]{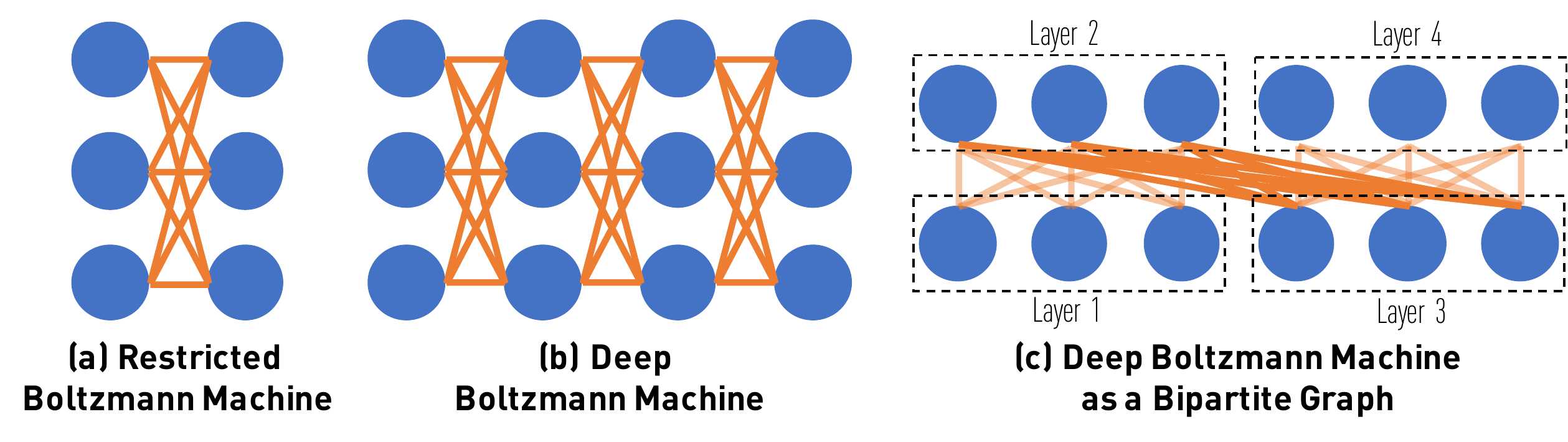}
\caption{Restricted Boltzmann machines and deep Boltzmann machines as bipartite systems.}
\label{fig:rbmdbm}
\end{figure*}

Markov chain Monte Carlo is a common approach to perform inference for RBMs,
which involves sampling a configuration from the Gibbs distribution $\pi$. 
The de facto algorithm for this task is Gibbs sampling,
in which the conditional probability of each step can be calculated from only the Hamiltonian. 
In this context, the alternating-scan algorithm we study corresponds to a {\em layerwise scan} --- 
first update all variables in $V_1$ and then all variables in $V_2$. 
This scan order allows one to use efficient linear algebra primitives such as dense matrix multiplication implemented with GPUs or SIMD instructions on modern CPUs.

Deep Boltzmann Machines (DBM), introduced by \cite{Salakhutdinov:2009:AISTATS}, is a Deep Learning model that extends RBM to multiple layers as illustrated in Figure~\ref{fig:rbmdbm}(b). 
This layer structure is indeed bipartite, shown in Figure~\ref{fig:rbmdbm}(c). 
The scan order induced is thus to update odd layers first and even ones after.
Like most deep learning models, the scan (evaluation) order of variables has significant impact on the speed and performance of the system. 
The layerwise implementation is particularly advantageous thanks to dense linear algebra primitives.

Given an RBM or DBM with $n$ variables, it is easy to see that $\log \pi_{\min}^{-1}$ is $O(n)$.
Thus, Corollary~\ref{corollary:mixing} implies that, comparing to the random-update algorithm,
the layerwise systematic scan algorithm incurs at most a $O(n^2)$ slowdown in the convergence rate.
This comparison result improves exponentially (in the worst case) upon previous result \citep{HDMR16}.

\section{Experiments}\label{sec:experiments}

Empirically evaluating the mixing time of Markov chains is notoriously difficult.
In general, it is hard under certain complexity assumptions \citep{BBM11} and lower bounds have been established for more concrete settings by \cite{HKS15} 
(see also \citep{HKS15} for a comprehensive survey on this topic).
We evaluate the mixing time in either exact and straightforward or approximate but tractable ways, including 
(1) calculating directly using the transition matrix for small graphs, 
(2) taking advantage of symmetries in the state space for medium-sized graphs, 
and (3) using the coupling time (defined later) as a proxy of the mixing time for large graphs.

\para{Mixing Time on Small Graphs}
We evaluate the mixing time in a brutal force way, 
namely, we multiply the transition matrix until the total variation distance to the stationary distribution is below the threshold.
Since the state space is exponentially large, such a method is only feasible in small graphs.

Figure~\ref{fig:mixing_rbm} and 
Figure~\ref{fig:mixing_dbm} contains 
the comparison of the mixing time
for small graphs (RBMs of up to 12
variables and DBMs with 4 layers and
3 variable per layer). 
We vary
(1) number of variables, (2)
factor functions (shown as
the entries of truth table 
in the caption), or (3) 
the weight of factors, in different
figures and report the mixing times
of random updates and layerwise scan.
All solid lines count 
mixing time in \# variable updates
and the dotted line in \# epochs. 

\begin{figure*}[t]
\centering
\includegraphics[width=0.9\textwidth]{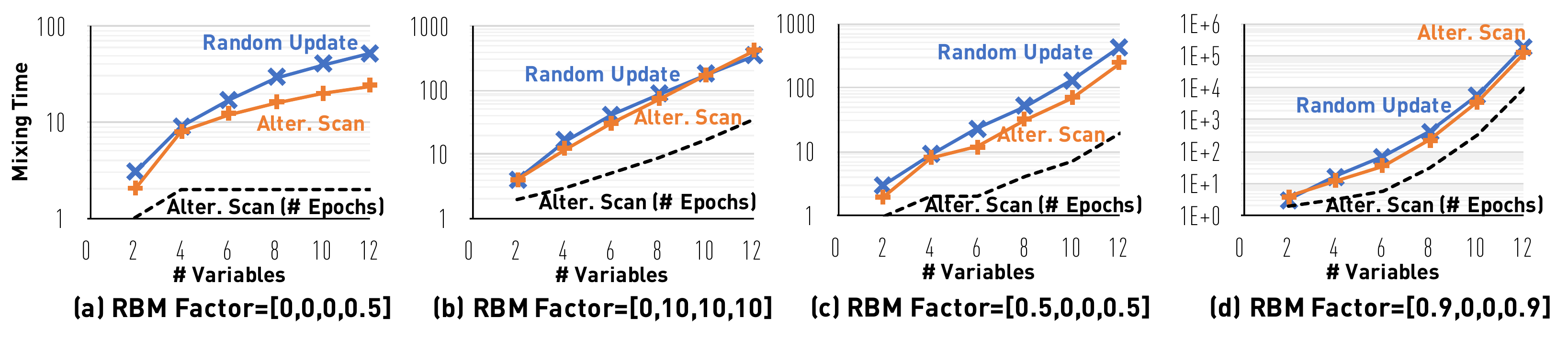}
\caption{Mixing time of Gibbs samplers on 
Restricted Boltzmann Machines. See Section~\ref{sec:experiments}.}
\label{fig:mixing_rbm}
\end{figure*}

\begin{figure*}[t]
\centering
\includegraphics[width=0.7\textwidth]{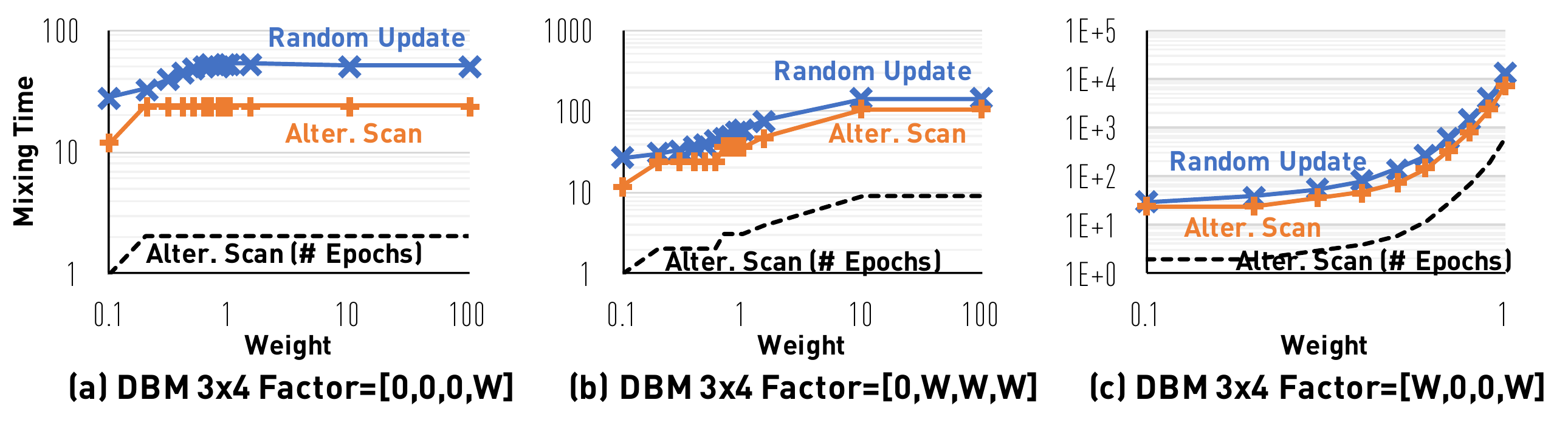}
\caption{Mixing time of Gibbs samplers on 
Deep Boltzmann Machines. See Section~\ref{sec:experiments}.}
\label{fig:mixing_dbm}
\end{figure*}

We see that, empirically, alternating scan has comparable, sometimes better, mixing time than random updates,
even when counting in the number of variable updates. 
On one hand, it confirms our result that the mixing time of alternating scan and random updates are similar. 
On the other, it shows that our result, although asymptotically tight for the worst case, is not ``instance optimal''. 
This observation indicates promising future direction for beyond-worst case analysis.


\para{Medium-sized Graphs}
We now turn to Example \ref{exm:bipartite-complete}, which has also been studied by \cite{HDMR16} and is asymptotically the worst case of Theorem \ref{thm:AS}. 
Due to certain symmetries, we have a much more succinct representation of the state space, 
and manage to calculate the mixing and relaxation times for mildly larger graphs (up to 50 variables). 
As illustrated in Figure \ref{fig:mixing_indset}, the alternating-scan sampler is slower than, but still comparable to the random-update sampler.
This is consistent with the discussion in Example \ref{exm:bipartite-complete}.

\begin{figure}[h]
\centering
\includegraphics[width=0.5\textwidth]{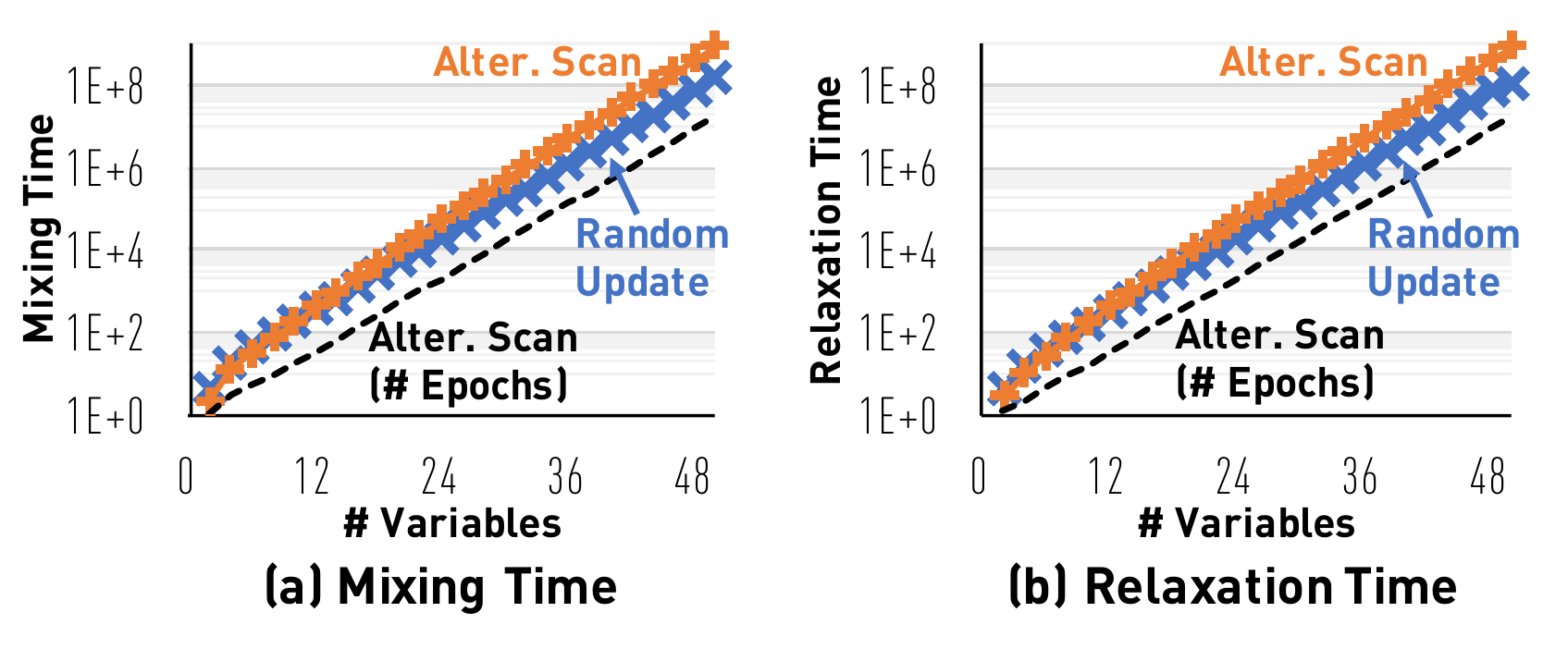}
\caption{Mixing time comparison on medium-sized graphs.}
\label{fig:mixing_indset}
\end{figure}


\para{Coupling Time on Large Graphs} 
Lastly, we use the coupling time as a proxy of the mixing time and
estimate it on large graphs with $10^4$ variables and $5\times 10^4$ randomly chosen factors. 

We use the grand coupling \citep[Chapter~5]{LPW06}.
Let $T_{\sigma,\tau}$ be the first time two copies of the same Markov chain meet, 
with initial states $\sigma$ and $\tau$, under certain coupling. 
Then the coupling time is $\max_{(\sigma,\tau)\in\Omega^2} T_{\sigma,\tau}$. 
All of the models we tested are monotone~\citep{PW13}, 
in which the coupling time under the grand coupling can be easily evaluated by simulating from the top and bottom states.
The coupling time is closely related to the mixing time~\citep[Chapter~5]{LPW06}.
In particular, it is an upper bound of the mixing time regardless of the coupling,
and designing a good coupling is an important technique to prove rapid mixing \citep{BD97}.
Our experimental findings are summarized in Figure \ref{fig:coupling}.

\begin{figure}[h]
\centering
\includegraphics[width=0.5\textwidth]{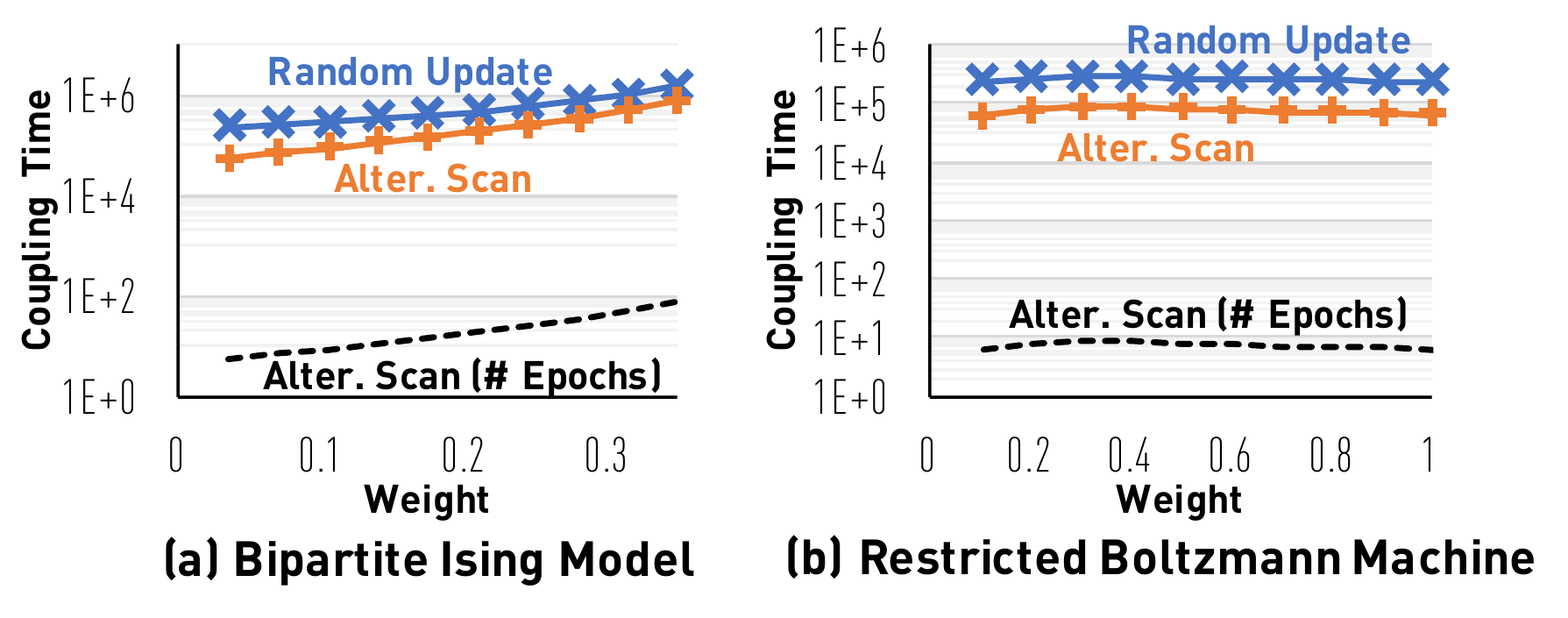}
\caption{Coupling time comparison on large and random graphs.}
\label{fig:coupling}
\end{figure}

In these experiments, we choose our parameters to stay within the rapidly mixing regime \citep{MS13} and avoid exponential mixing times.
As we can see in Figure \ref{fig:coupling}, alternating scan is faster than random updates (in terms of variable updates).
Indeed, numerical evidence suggests that the speedup factor is close to $2$.

\section{Concluding Remarks}

In summary, we have shown that for a bipartite distribution, 
the relaxation time of the alternating-scan sampler (in terms of epochs) is no larger than that of the random-update sampler. 
This is asymptotically tight and implies a (weaker) comparison result on the mixing time.
Future directions include more fine-grained comparison results, 
and going beyond bipartite distributions.

\bibliographystyle{plainnat}
\bibliography{Systematic-Scan}

\end{document}